\newcommand{\blue}[1]{\begin{color}{blue}#1\end{color}}
\def\R{\mathbb{R}}
\def\norm#1{\|#1\|}
\def\inprod#1#2{\langle #1,\,#2\rangle}
\def\Prox{\mbox{Prox}}
\def\bx{\mathbf{x}}
\def\by{\mathbf{y}}
\def\bv{\mathbf{v}}
\def\ba{\mathbf{a}}
\def\bz{\mathbf{z}}
\def\bp{\mathbf{p}}
\def\R{\mathbb{R}}
\def\alp{\alpha}
\def\lam{\lambda}
\def\gam{\gamma}
\def\mb#1{\mathbf{#1}}
\def\cB{{\cal B}} \def\cE{{\cal E}} \def\cP{{\cal P}}
\begin{document}

\title{Convex Clustering: Model, Theoretical Guarantee and Efficient Algorithm}

\author{\name Defeng Sun \email defeng.sun@polyu.edu.hk \\
       \addr Department of Applied Mathematics\\
       \blue{The Hong Kong Polytechnic University}\\
       Hong Kong
       \AND
       \name Kim-Chuan Toh \email mattohkc@nus.edu.sg \\
       \addr Department of Mathematics  and Institute of Operations Research and Analytics \\
       \blue{National University of Singapore}\\
       10 Lower Kent Ridge Road, Singapore
       119076
       \AND
       \name Yancheng Yuan \email yuanyancheng@u.nus.edu \\
       \addr Department of Mathematics\\
       \blue{National University of Singapore}\\
       10 Lower Kent Ridge Road, Singapore
       119076
   }

\editor{}

\maketitle

\begin{abstract}
Clustering is a fundamental problem in unsupervised learning. Popular methods like K-means, may suffer from
poor performance as they are prone to get stuck in its local minima. Recently, the sum-of-norms (SON) model (also known as the clustering path) 
has been proposed in \citet{pelckmans05}, \citet{Lindsten11} and \citet{Hocking11}. The perfect recovery properties of the convex clustering model with uniformly weighted all-pairwise-differences regularization have been proved by \cite{NIPS2014_5307} and \cite{pmlr-v70-panahi17a}. However,  no theoretical guarantee has been established for the
general weighted convex clustering model, where better empirical results have been observed. In the numerical optimization aspect, although algorithms like the alternating direction method of multipliers (ADMM) and the alternating minimization algorithm (AMA) have been proposed to solve the convex clustering model \citep{Chi15}, it still remains  very challenging to solve large-scale problems. In this paper, we establish sufficient conditions for the perfect recovery guarantee of the general weighted convex clustering model, which include and improve existing theoretical results as special cases. In addition, we develop a semismooth Newton based augmented Lagrangian method for solving large-scale convex clustering problems. Extensive numerical experiments on both simulated and real data demonstrate that our algorithm is  highly efficient and robust for solving large-scale problems. Moreover, the numerical results also show the superior performance and scalability of our algorithm comparing to the existing first-order methods. In particular, our algorithm is able to solve a convex clustering problem
with 200,000 points in $\R^3$ in about 6 minutes.
\end{abstract}

\begin{keywords}
  Convex Clustering, Augmented Lagrangian Method, Semismooth Newton Method, Unsupervised Learning.
\end{keywords}

\section{Introduction}

Clustering is one of the most fundamental  problems in unsupervised learning. Traditional clustering models such as K-means clustering, hierarchical clustering may suffer from poor performance
because of the non-convexity of the models and the difficulties in finding global optimal solutions for such models. The clustering results are generally highly dependent on the initializations and the results could differ significantly with different initializations.
Moreover, these clustering models require the prior knowledge about the number of clusters which is not available in many real applications. Therefore, in practice, K-means is typically tried with different cluster numbers and the user will then decide on a suitable value based on his judgment on which computed result agrees best with his domain knowledge. Obviously, such a process could make the clustering results subjective.

In order to overcome the above issues, a new clustering model has been proposed \citep{pelckmans05, Lindsten11, Hocking11} and demonstrated to be more robust compared to those traditional ones. Let $A \in \mathbb{R}^{d \times n} = [\mathbf{a}_1, \mathbf{a}_2, \cdots, \mathbf{a}_n]$ be a given data matrix with $n$ observations and $d$ features. The convex clustering model for these $n$ observations solves the following convex optimization problem:
\begin{eqnarray}\label{Eq: Origin_Model}
\min_{X \in \R^{d \times n}} \frac{1}{2} \sum_{i=1}^{n} \|\mathbf{x}_i - \mathbf{a}_i\|^2 + \gamma\sum_{i < j} \|\mathbf{x}_i - \mathbf{x}_j\|_p,
\end{eqnarray}
where  $\gamma>0$ is a tuning parameter, and $\norm{\cdot}_p$ denotes the $p$-norm.
Here and below, $\norm{\cdot}$ is used to denote the vector $2$-norm or the Frobenius norm of a matrix.
The $p$-norm  above with $p \geq 1$ ensures the convexity of the model.
Typically $p$ is chosen to be $1, 2,$ or $\infty$. 
After solving (\ref{Eq: Origin_Model}) and obtaining the optimal solution $X^* = [\mb{x}_1^*,\ldots,\mb{x}_n^*]$,
we assign
$\mathbf{a}_i$ and $\mathbf{a}_j$ to the same cluster if and only if $\mathbf{x}^*_i = \mathbf{x}^*_j$. In other words, $\mathbf{x}^*_i$ is the centroid for observation $\mathbf{a}_i$. (Here we used the word ``centroid'' to mean the
	approximate one associated with $\mathbf{a}_i$ but not the final centroid of the cluster to which
	$\mathbf{a}_i$ belongs to.)
The idea behind this model is that if two observations $\mathbf{a}_i$ and $\mathbf{a}_j$ belong to the same cluster, then their corresponding centroids $\mathbf{x}^*_i$ and $\mathbf{x}^*_j$ should be the same.  The first term in (\ref{Eq: Origin_Model}) is the fidelity term while the second term
is the regularization term to penalize the differences between different centroids so as to enforce the property that
centroids for observations in the same cluster should be identical.

The advantages of convex clustering lie mainly in two aspects. First, since the clustering model (\ref{Eq: Origin_Model}) is strongly convex, the optimal solution for a given positive $\gamma$ is unique and is more easily
obtainable than traditional clustering algorithms like K-means. Second, instead of requiring the prior knowledge of the cluster number, we can generate a clustering path via solving (\ref{Eq: Origin_Model}) for a sequence of positive values of $\gamma$.
To handle cluster recovery for large-scale data sets, various researchers,
e.g., \cite{pelckmans05,Lindsten11, Hocking11, NIPS2014_5307,  tan2015statistical, pmlr-v70-panahi17a}
have suggested the following weighted clustering model modified from (\ref{Eq: Origin_Model}):
\begin{equation}\label{Eq: Modified_Model}
\min_{X \in \R^{d \times n}} \frac{1}{2} \sum_{i=1}^{n} \|\mathbf{x}_i - \mathbf{a}_i\|^2 + \gamma\sum_{i < j} w_{ij}\|\mathbf{x}_i - \mathbf{x}_j\|_p,
\end{equation}
where $w_{ij} = w_{ji} \geq 0$ are given weights that are generally chosen based on the given input data $A$.
One can regard the original convex clustering model (\ref{Eq: Origin_Model}) as a special case if we take $w_{ij} = 1$ for all $i < j$. To make the computational cost cheaper
when evaluating the regularization term, one would generally put a non-zero weight only for a
pair of points which are nearby each other, and  a typical choice of the weights is
$$
w_{ij} = \left\{
\begin{array}{ll}
\exp(-\phi\|\mathbf{a}_i - \mathbf{a}_j\|^2)  & {\rm if} \; (i, j)\in \mathcal{E},\\[5pt]
0 & {\rm otherwise},
\end{array}
\right.
$$
where $\mathcal{E} = \cup_{i=1}^n \{(i, j) \mid  \mbox{$j$ is among $i$'s $k$-nearest}$ $\mbox{neighbors}, i< j\leq n\}$.

The advantages just mentioned and the success of the convex model (\ref{Eq: Origin_Model}) in recovering clusters in many examples with well selected values of $\gamma$ have motivated researchers to provide theoretical guarantees on the cluster recovery property of (\ref{Eq: Origin_Model}). The first theoretical result on cluster recovery
established in \citep{NIPS2014_5307} is valid for only two clusters. It showed that the model (\ref{Eq: Origin_Model}) can recover the two clusters perfectly if the data points are drawn from two cubes that well separated. \citet{tan2015statistical} analyzed the statistical properties of (\ref{Eq: Origin_Model}). Recently,  \citet{pmlr-v70-panahi17a} provided  theoretical recovery results in the general $k$ clusters case under relatively mild sufficient conditions, for the fully uniformly weighted convex model (\ref{Eq: Origin_Model}).

In the practical aspect, various researchers have observed that better empirical performance can be
achieved by (\ref{Eq: Modified_Model}) with well chosen weights when comparing to the original  model (\ref{Eq: Origin_Model}) \citep{Hocking11, Lindsten11, Chi15}. However, to the best of our knowledge, no theoretical recovery guarantee has been established for the general weighted convex clustering model (\ref{Eq: Modified_Model}). In this paper, we will propose mild sufficient conditions for (\ref{Eq: Modified_Model}) to attain perfect recovery guarantee, which also include and improve the theoretical results in \citep{NIPS2014_5307,pmlr-v70-panahi17a} as special cases. Our theoretical results thus definitively strengthen the theoretical foundation of convex clustering model.
As expected, the conditions provided in the theoretical analysis are usually not checkable before one find the right clusters and thus the range of parameter values for $\gamma$ to achieve perfect recovery is unknown a priori. In practice,
this difficulty is mitigated by choosing a sequence of values of $\gamma$ to generate a clustering path.

The challenges for the convex model to obtain meaningful cluster recovery is then to solve it efficiently for a range of values of $\gamma$.  \citet{Lindsten11} used the off-the-shelf solver, CVX, to generate the solution path. However, \citet{Hocking11} realized that CVX is competitive only
for small-scale problems and it does not scale well when the number of data points increases. Thus
the paper introduced three algorithms based on the subgradient methods
for different regularizers corresponding to $p=1,2,\infty$. Recently, some new algorithms have been proposed to solve this problem.
\citet{Chi15} adapted the ADMM and AMA to solve (\ref{Eq: Origin_Model}). However,
as we will see in our numerical experiments, both algorithms may still encounter scalability issues, albeit less severe than CVX. Furthermore, the efficiency of these two algorithms is sensitive to the parameter value $\gamma$.
This is not a favorable property since we need to
solve (\ref{Eq: Origin_Model}) with $\gamma$ in a relative large range to generate the clustering path. In
\citet{pmlr-v70-panahi17a},  the authors proposed a stochastic splitting algorithm for (\ref{Eq: Origin_Model})
in an attempt to resolve the aforementioned scalability issues.
Although this stochastic approach scales well with the problem scale ($n$ in (\ref{Eq: Origin_Model})), the convergence rate shown in \cite{pmlr-v70-panahi17a} is rather weak
in that it requires at least $l\geq n^4/\varepsilon$ iterations to generate a solution $X^l$ such that
$\norm{X^l-X^*}^2\leq \varepsilon$ is satisfied with high probability.
Moreover, because the error estimate is given in the sense of high probability, it is difficult to design
an appropriate stopping condition for the algorithm in practice.

As the readers may observe, all the existing algorithms are purely first-order methods
that do not use any second-order information underlying the convex clustering model. In contrast, here we design and analyse a deterministic second-order algorithm,
the semismooth Newton based augmented Lagrangian method, to solve the convex clustering model.
Our algorithm is motivated by the recent work
\cite{li2016highly} in which the authors have proposed a semismooth Newton augmented Lagrangian method (ALM) to solve Lasso and fused Lasso problems, and the algorithm is demonstrated to be highly efficient for solving large, or even huge scale problems accurately. We are thus inspired
to adapt this ALM framework for solving the convex clustering model (\ref{Eq: Modified_Model}) in this paper.

Next we present a short summary of our main contributions in this paper.
\begin{itemize}
	\item[1.] We prove the perfect recovery guarantee of  the
	 general weighted convex clustering model (\ref{Eq: Modified_Model})
	 under mild sufficient conditions.
	Our results are not only applicable to the more practical  weighted convex model but also
	improve the existing results when specialized to the fully uniformly weighted model (\ref{Eq: Origin_Model}).
	Moreover, our bounds for the tuning parameter $\gamma$  are given explicitly in terms of
	the data points and their corresponding pairwise weights in the regularization term.
	
	\item[2.] We propose a highly efficient and scalable algorithm, called the semismooth Newton based augmented Lagrangian method, to solve the convex clustering model, which is not only proven to be theoretically efficient but it is also demonstrated to be practically highly efficient and robust.
\end{itemize}

The remaining parts of this paper are organized as follows. We will summarize some related work in section \ref{sect:related}. In section \ref{sect:Preli}, we will introduce some preliminaries and notation  which will be used in this paper. Theoretical results on the perfect recovery properties of the convex clustering  model will be presented in section 4. In section 5, we will introduce a highly efficient and robust optimization algorithm for solving the convex clustering model. After that, we will conduct numerical experiments
to verify the theoretical results and evaluate the performance of our
algorithm in section 6. Finally, we conclude the paper in section 7.

\section{Related Work Based on Semidefinite Programming}\label{sect:related}


In addition to the papers \citep{pelckmans05,Lindsten11, Hocking11, NIPS2014_5307,  tan2015statistical, pmlr-v70-panahi17a, chi2018provable} on the convex models (\ref{Eq: Origin_Model}) and (\ref{Eq: Modified_Model}), other convex models have been proposed to deal with the non-convexity of the K-means clustering model.
One such model is the convex relaxation of the K-means model via semidefinite programming (SDP) \citep{peng2007approximating, awasthi2015relax, mixon2016clustering}.

For a given data matrix $A \in \mathbb{R}^{d \times n} = [\ba_1, \ba_2, \dots, \ba_n]$, the classical K-means model solves the following non-convex optimization problem
\begin{equation}\label{eq: K-means}
\begin{array}{ll}
\min & \sum_{t=1}^k \sum_{i \in I_t} \|\ba_i - \frac{1}{|I_t|}\sum_{j \in I_t}\ba_j\|^2
\\[5pt]
{\rm s.t.} & \mbox{$ I_1, \ldots, I_k$ is a partition of   $\{1, 2, \dots, n\}$}.
\end{array}
\end{equation}
Now, if we define the $n \times n$ matrix $D$ by $D_{ij} = \|\ba_i - \ba_j\|^2$, then by taking
$$X := \sum_{t=1}^k \frac{1}{|I_t|}\mathbf{1}_{I_t}\mathbf{1}_{I_t}^T, $$
where $\mathbf{1}_{I_t}\in \R^n$ is the indicator vector of the index set $I_t$. We can express the objective function in (\ref{eq: K-means}) as $\frac{1}{2}{\rm Tr}(DX)$. Based on this, \cite{peng2007approximating} proposed the following SDP relaxation of the K-means model
\begin{equation}\label{eq: SDP-Relaxation-kmeans}
\begin{array}{ll}
\min  \Big\{ {\rm Tr}(DX)  \,\mid\, {\rm Tr}(X) = k, \; X\mathbf{e} = \mathbf{e}, \; X \geq 0, \; X \in \mathbb{S}_n^+
\Big\},
\end{array}
\end{equation}
where $X \geq 0$ means that all the elements in $X$ are nonnegative,
$\mathbb{S}^n_+$ is the cone of $n \times n$ symmetric and positive semidefinite matrices, and  $\mathbf{e} \in \R^n$ is the column vector of all ones.

Recently, \cite{mixon2016clustering} proved that the K-means SDP relaxation approach
can achieve perfect cluster recovery with high probability when the data $A$ is sampled from the
stochastic unit-ball model, provided that the cluster centriods $\{ \ba^{(1)},\ldots,\ba^{(k)}\}$ satisfy the condition that
$\min\{\norm{\ba^{(\alp)} - \ba^{(\beta)}} \mid 1\leq \alp < \beta \leq k\} > 2\sqrt{2}(1+1/\sqrt{d}).$
However, the computational efficiency of SDP based relaxations highly depends on the efficiency of the available SDP solvers.
While  recent progress \citep{zhao2010newton,yang2015sdpnal, sun2017sdpnal+} in solving large-scale SDPs
allows one to solve the SDP relaxation problem for clustering 2--3 thousand points, it is however prohibitively expensive
to solve the problem when $n$ goes beyond $3000$.

The work in \citep{Chi15} has implicitly demonstrated that it is generally much cheaper to solve the model  (\ref{Eq: Modified_Model}) instead of the SDP relaxation model. However, based on our numerical experiments, the algorithms ADMM and AMA proposed in \citep{Chi15} for solving (\ref{Eq: Modified_Model}) only work efficiently when the number of data points is not too large (several thousands depending on the feature dimension of the data). Also, it is not easy for the proposed algorithms in \citep{Chi15} to achieve relatively high accuracy. This also explains why we need to design a new algorithm in this paper to overcome the aforementioned
difficulties.

\section{Preliminaries and Notation}\label{sect:Preli}
In this section, we first introduce some preliminaries and notation which will be used later in this paper. For theoretical analysis, we adopt some definitions and notation from \citep{NIPS2014_5307, pmlr-v70-panahi17a}.
\begin{definition}
	For a given finite set $A = \{\mathbf{a}_1, \mathbf{a}_2, \dots, \mathbf{a}_n\} \subset \mathbb{R}^d$ and its partitioning $\mathcal{V} = \{V_1, V_2, \dots, V_{K}\}$, where each $V_i$ is a subset of $A$.
	\\[5pt]
(a) We say that a map $\psi$ on $A$ perfectly recovers $\mathcal{V}$ when $\psi(\mathbf{a}_i) = \psi(\mathbf{a}_j)$ is equivalent to $\mathbf{a}_i$ and $\mathbf{a}_j$ belonging to the same cluster. In other words, there exist distinct vectors $\mathbf{v}_1, \mathbf{v}_2, \dots, \mathbf{v}_K$ such that $\psi(\mathbf{a}_i) = \mathbf{v}_{\alpha}$ holds whenever $\mathbf{a}_i \in V_{\alpha}$.
\\[5pt]
(b) We call a partitioning $\mathcal{W} = \{W_1, W_2, \dots, W_L\}$ of $A$  a coarsening of $\mathcal{V}$ if each partition $W_l$ is obtained by taking the union of a number of partitions in ${\cal V}$. Furthermore, $\mathcal{W}$ is called the trivial coarsening of $\mathcal{V}$ if $\mathcal{W} = \{A\}$. Otherwise, it is called a non-trivial coarsening.
\end{definition}
\begin{definition}
	For any finite set $S \subset \mathbb{R}^d$, its diameter with respect to the
	$q$-norm for $q\geq 1$ is defined as
	$$D_q(S) := \max\{\|\mathbf{x} - \mathbf{y}\|_q \mid \mathbf{x}, \mathbf{y} \in S \}.$$
	Moreover, we define its separation and centroid, respectively, as
	$$
	d_q(S) := \min\{\|\mathbf{x} - \mathbf{y}\|_q \mid \mathbf{x}, \mathbf{y} \in S, \mathbf{x}
	\not = \mathbf{y}\},
	\qquad
	c(S) = \frac{\sum_{\mathbf{x}\in S}\mathbf{x}}{|S|}.
	$$
	For convenience, for any family of mutually disjoint finite sets $\mathcal{F} = \{F_i \subset \mathbb{R}^d\}$, we define $\mathcal{C}(\mathcal{F}) = \{c(F_i)\}$.
\end{definition}


Later in this paper, we will establish the theoretical recovery guarantee based on the above definitions. Next, we will introduce some preliminaries and notations for the design and analysis of the numerical optimization algorithms.

For a given simple undirected graph ${\cal G} = (\{1,\ldots,n\}, \mathcal{E})$ with $n$ vertices and edges defined in $\mathcal{E}$, we define the symmetric adjacency matrix $G \in \R^{n\times n}$ with entries
$$G_{ji} = G_{ij} = \left\{
\begin{array}{lc}
1 & if \; (i , j)\in\mathcal{E},\\
0 & \mbox{otherwise.}
\end{array}
\right.
$$
Based on an enumeration of the index pairs in $\mathcal{E}$ (say in the lexicographic order), which we
	denote by $l(i,j)$ for the pair $(i,j)$,
we define the node-arc incidence matrix $\mathcal{J} \in \R^{n\times|\mathcal{E}|}$ as
\begin{equation}\label{eq: construct_J}
\mathcal{J}_{k}^{l(i , j)}= \left\{
\begin{array}{rl}
1 & \mbox{if $k = i$},\\
-1 & \mbox{if $k = j$},\\
0 & \mbox{otherwise},
\end{array}
\right.
\end{equation}
where $\mathcal{J}^{l(i , j)}_k$ is the $k$-th entry of the $l(i , j)$-th column of $\mathcal{J}_k$.
	
\begin{proposition}\label{Prop: 1}
	With matrices $G$, $\mathcal{J}$ defined above, we have the following results
	\begin{equation}
	\mathcal{J}\mathcal{J}^T  = {\rm diag}(G\mathbf{e})- G =: L_G,
	\end{equation}
	where  $\mathbf{e} \in \R^n$ is the column vector of all ones,
	and
$L_G$ is the Laplacian matrix associated with the
adjacency matrix $G$.
\end{proposition}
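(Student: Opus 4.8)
The plan is to verify the claimed identity entrywise. Writing the columns of $\mathcal{J}$ as indexed by the edges $l(i,j)\in\mathcal{E}$, the $(k,m)$ entry of $\mathcal{J}\mathcal{J}^T$ is the inner product of the $k$-th and $m$-th rows of $\mathcal{J}$, namely
\begin{equation}
(\mathcal{J}\mathcal{J}^T)_{km} = \sum_{l(i,j)\in\mathcal{E}} \mathcal{J}_k^{l(i,j)}\,\mathcal{J}_m^{l(i,j)}.
\end{equation}
I would then split the computation into the diagonal case $k=m$ and the off-diagonal case $k\neq m$, reading off the value of each column entry from the definition \eqref{eq: construct_J}.

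For the diagonal entries, $(\mathcal{J}\mathcal{J}^T)_{kk}=\sum_{l(i,j)}(\mathcal{J}_k^{l(i,j)})^2$, and by \eqref{eq: construct_J} the summand equals $1$ precisely when $k$ is an endpoint of the edge $(i,j)$ (i.e.\ $k=i$ or $k=j$) and $0$ otherwise. Hence the diagonal entry counts the number of edges incident to vertex $k$, which is the degree $\sum_{j} G_{kj}=(G\mathbf{e})_k$. This matches the $k$-th diagonal entry of ${\rm diag}(G\mathbf{e})$.

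For the off-diagonal entries with $k\neq m$, a summand $\mathcal{J}_k^{l(i,j)}\mathcal{J}_m^{l(i,j)}$ is nonzero only when the edge $(i,j)$ has both $k$ and $m$ as its two endpoints. Because $\mathcal{G}$ is a simple graph, there is at most one such edge; when it exists, one of the two entries is $+1$ and the other is $-1$, so their product is $-1$, and when it does not exist the sum is $0$. Therefore $(\mathcal{J}\mathcal{J}^T)_{km}=-G_{km}$ for $k\neq m$. Since a simple graph has no self-loops, $G_{kk}=0$, so the diagonal formula can be absorbed uniformly and we obtain $\mathcal{J}\mathcal{J}^T={\rm diag}(G\mathbf{e})-G=L_G$, as claimed.

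This proof is a routine bookkeeping argument rather than a deep one; there is no real obstacle. The only points demanding care are the two consequences of the simple-graph hypothesis: that there is at most one edge joining any pair of distinct vertices (so the off-diagonal sum has a single contributing term and no double counting occurs), and that there are no self-loops (so $G$ has zero diagonal, making the diagonal and off-diagonal cases fit the single expression ${\rm diag}(G\mathbf{e})-G$).
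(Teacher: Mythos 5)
Your proof is correct: the entrywise computation of $\mathcal{J}\mathcal{J}^T$, splitting into diagonal entries (which count degrees, giving ${\rm diag}(G\mathbf{e})$) and off-diagonal entries (which pick up $-1$ exactly on edges, giving $-G$), is the standard argument, and you rightly flag the two places where simplicity of the graph is used (no parallel edges, no self-loops). The paper itself states Proposition \ref{Prop: 1} without proof, treating it as a known fact about incidence matrices and graph Laplacians, so your verification supplies exactly the routine argument the authors left implicit.
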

Now, for given variables $X \in \R^{d \times n}$, $Z \in \R^{d \times |\mathcal{E}|}$ and the graph $G$, we define the linear map $\mathcal{B}: \R^{d \times n} \to \R^{d \times |\mathcal{E}|}$ and its adjoint  $\mathcal{B}^*: \R^{d \times |\mathcal{E}|} \to \R^{d \times n}$,
respectively, by
\begin{eqnarray}
\mathcal{B}(X) &=& [(\mathbf{x}_i - \mathbf{x}_j)]_{(i , j)\in\mathcal{E}} = X \mathcal{J},
\label{eq-B}
\\[5pt]
\mathcal{B}^*(Z) &=& Z \mathcal{J}^T.
\label{eq-Bt}
\end{eqnarray}
Thus, by Proposition \ref{Prop: 1}, we have
\begin{equation}
\mathcal{B}^*(\mathcal{B}(X)) = X \mathcal{J}\mathcal{J}^T = XL_{G}.
\label{eq-BTB}
\end{equation}
For a given proper and closed convex function $p : \mathcal{X} \to (-\infty, +\infty]$, its proximal mapping $\Prox_{tp}(x)$ for $p$ at any $x \in \mathcal{X}$ with $t > 0$ is defined by
\begin{equation}\label{Eq: prox_map}
\Prox_{tp}(x) = \arg\min_{u \in \mathcal{X}} \{tp(u) + \frac{1}{2}\|u - x\|^2\}.
\end{equation}
In this paper, we will often make use of the following Moreau identity (See \citet{bauschke2011convex}[Theorem 14.3(ii)])
$$
\Prox_{tp}(x) + t\Prox_{p^*/t}(x/t) = x,
$$
where $t > 0$  and $p^*$ is the conjugate function of $p$.
It is well known that
proximal mappings are important for designing optimization algorithms and they have been well studied.
The proximal mappings for many commonly used functions have closed form formulas. Here, we summarize
those that are related to this paper in Table \ref{tab: prox_map}. In the table,
$\Pi_{C}$ denotes the projection onto a given closed convex set $C$.
\begin{table}[!h]
	\caption{Proximal maps for selected functions}
	\label{tab: prox_map}
	\begin{center}
		\begin{small}			
			\begin{tabular}{lll}
				\toprule
				$p(\cdot)$ & $\Prox_{tp}(\mathbf{x})$ & Comment\\
				\midrule
				$\|\cdot\|_1$ & $\left[1 - \frac{t}{|\mathbf{x}_l|}\right]_+ \mathbf{x}_l$ & Elementwise soft-thresholding\\
				$\|\cdot\|_2$ & $\left[1 - \frac{t}{\|\mathbf{x}\|_2}\right]_+ \mathbf{x}$ & Blockwise soft-thresholding\\
				$\|\cdot\|_{\infty}$ & $\mathbf{x} - \Pi_{t\mathcal{S}}(\mathbf{x})$ & $\mathcal{S}$ is the unit $\ell_1$-ball\\
				\bottomrule
			\end{tabular}			
		\end{small}
	\end{center}
	\vskip -0.1in
\end{table}

\section{Theoretical Guarantee of Convex Clustering Models}

The empirical success of the convex clustering model (\ref{Eq: Origin_Model}) has
strongly motivated researchers to investigate its theoretical clustering recovery guarantee. The perfect recovery results for convex clustering model (\ref{Eq: Origin_Model}), where all pairwise differences are considered with equal weights, have been proved by \citet{NIPS2014_5307} for the 2-clusters case and later by
\citet{pmlr-v70-panahi17a} for the $k$-clusters case.
\citet{tan2015statistical} analyzed the statistical properties of model (\ref{Eq: Origin_Model}) and \citet{radchenko2017convex} analyzed the statistical properties of model (\ref{Eq: Origin_Model}) with the
$\ell_1$-regularization term. In practice, many researchers (e.g. \cite{tan2015statistical, Chi15}) have
suggested the use of  the model  (\ref{Eq: Modified_Model}), which is not only computationally more attractive
but also lead to more robust clustering results. However, so far no theoretical guarantee has been provided for the convex clustering model with general weights. In this section, we first review the nice theoretical results proved by \citet{NIPS2014_5307} and \cite{pmlr-v70-panahi17a} for (\ref{Eq: Origin_Model}), and then we will present our new theoretical guarantee for the more challenging case of the general weighted convex clustering model
 (\ref{Eq: Modified_Model}).

\subsection{Theoretical Recovery Guarantee of Convex Clustering Model (\ref{Eq: Origin_Model})}

The first theoretical result by \citet{NIPS2014_5307}
guarantees the perfect recovery
 of (\ref{Eq: Origin_Model}) for the two-clusters case when the data in each cluster are contained in a cube
 and the two cubes are sufficiently well separated.
%
More recently, much stronger theoretical results have been established by
\citet{pmlr-v70-panahi17a} wherein the authors
proved the theoretical recovery guarantee of the fully uniformly weighted
model (\ref{Eq: Origin_Model})  for the general $k$-clusters case.

\begin{theorem}[\citet{pmlr-v70-panahi17a}]\label{Theorem: Theory-17-ICML}
Consider a finite set $A = \{\mathbf{a}_i \in \mathbb{R}^d \mid i = 1, 2, \dots, n\}$ of vectors and its partitioning $\mathcal{V} = \{V_1, V_2, \dots, V_{K}\}$.
For the SON model in (\ref{Eq: Origin_Model}), denote its optimal solution by $\{\bar{\mathbf{x}}_i\}$ and define the map $\phi(\mathbf{a}_i) = \bar{\mathbf{x}}_i$, $i=1,\ldots,n$.
\begin{itemize}
	\item[(i)] If $\gamma$ is chosen such that
	$$\max_{V \in \mathcal{V}}\frac{D_2(V)}{|V|} \;\leq \;\gamma \leq \frac{d_2(\mathcal{C}(\mathcal{V}))}{2n\sqrt{K}},
	$$
	then the map $\phi$ perfectly recovers $\mathcal{V}$.
	\item[(ii)] If $\gamma$ satisfies the following inequalities,
	$$\max_{V \in \mathcal{V}}\frac{D_2(V)}{|V|} \leq \gamma \leq \max_{V \in \mathcal{V}}\frac{\|c(A) - c(V)\|_2}{|A| - |V|},$$
	then the map $\phi$ perfectly recovers a non-trivial coarsening of $\mathcal{V}$.
\end{itemize}
\end{theorem}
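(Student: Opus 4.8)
The plan is to exploit the strong convexity of the objective in (\ref{Eq: Origin_Model}) (for $p=2$), which guarantees a unique optimal solution $\{\bar{\mathbf{x}}_i\}$, and to identify that solution by constructing an explicit KKT point whose values are constant on each cluster $V_\alpha$. Writing the subdifferential optimality condition as $\bar{\mathbf{x}}_i - \mathbf{a}_i + \gamma\sum_{j\neq i}\mathbf{u}_{ij} = 0$ for each $i$, where the $\mathbf{u}_{ij} = -\mathbf{u}_{ji}$ satisfy $\mathbf{u}_{ij} = (\bar{\mathbf{x}}_i - \bar{\mathbf{x}}_j)/\|\bar{\mathbf{x}}_i - \bar{\mathbf{x}}_j\|$ when $\bar{\mathbf{x}}_i \neq \bar{\mathbf{x}}_j$ and $\|\mathbf{u}_{ij}\| \le 1$ otherwise, I would first reduce the problem to a ``collapsed'' one. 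Substituting $\mathbf{x}_i = \mathbf{v}_\alpha$ for all $i \in V_\alpha$ into the objective shows that the constant-on-clusters restriction is exactly a weighted convex clustering problem on the $K$ centroids $c(V_\alpha)$ with fidelity weights $|V_\alpha|$ and regularization weights $|V_\alpha||V_\beta|$; let $\{\mathbf{v}_\alpha\}$ denote its (unique) solution.

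The first and central step is a fusion lemma: under the lower bound $\gamma \ge \max_V D_2(V)/|V|$, the lift $\bar{\mathbf{x}}_i := \mathbf{v}_{\alpha(i)}$ is the global solution of (\ref{Eq: Origin_Model}). To prove this I would exhibit explicit within-cluster subgradients $\mathbf{u}_{ij} = (\mathbf{a}_i - \mathbf{a}_j)/(\gamma|V_\alpha|)$ for $i,j \in V_\alpha$, together with the reduced-problem subgradients $\mathbf{g}_{\alpha\beta}$ across clusters. A direct computation shows $\gamma\sum_{j\in V_\alpha}\mathbf{u}_{ij} = \mathbf{a}_i - c(V_\alpha)$, so the per-node condition for $i\in V_\alpha$ collapses exactly to the reduced optimality condition $\mathbf{v}_\alpha - c(V_\alpha) + \gamma\sum_{\beta\neq\alpha}|V_\beta|\mathbf{g}_{\alpha\beta}=0$. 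The within-cluster multipliers are admissible precisely because $\|\mathbf{u}_{ij}\| = \|\mathbf{a}_i-\mathbf{a}_j\|/(\gamma|V_\alpha|) \le D_2(V_\alpha)/(\gamma|V_\alpha|) \le 1$ under the lower bound, and the cross-cluster multipliers inherit norm at most one from the reduced KKT conditions. Hence the lift is a valid KKT point, and by uniqueness it is the solution; in particular $\bar{\mathbf{x}}_i = \bar{\mathbf{x}}_j$ whenever $\mathbf{a}_i, \mathbf{a}_j$ lie in the same cluster, so $\phi$ recovers at least a coarsening of $\mathcal{V}$.

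It then remains to determine which coarsening, i.e., which $\mathbf{v}_\alpha$ coincide. Dividing the reduced optimality condition by $|V_\alpha|$ yields the displacement bound $\|\mathbf{v}_\alpha - c(V_\alpha)\| \le \gamma\sum_{\beta\neq\alpha}|V_\beta| = \gamma(n-|V_\alpha|)$. For part (i), the triangle inequality gives $\|\mathbf{v}_\alpha - \mathbf{v}_\beta\| \ge d_2(\mathcal{C}(\mathcal{V})) - 2\gamma n$, which is strictly positive under the stated upper bound $\gamma \le d_2(\mathcal{C}(\mathcal{V}))/(2n\sqrt{K})$ (indeed already under the milder $\gamma < d_2(\mathcal{C}(\mathcal{V}))/(2n)$), so all centroids are distinct and $\phi$ recovers $\mathcal{V}$ exactly. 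For part (ii), I would argue by contradiction that the coarsening is non-trivial: if all $\mathbf{v}_\alpha$ coincided with a common value $\mathbf{v}$, then preservation of the global centroid forces $\mathbf{v} = c(A)$, and the reduced condition gives $\|c(A) - c(V_\alpha)\| \le \gamma(|A|-|V_\alpha|)$ for every $\alpha$, contradicting $\gamma \le \max_V \|c(A)-c(V)\|_2/(|A|-|V|)$.

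The step I expect to be the main obstacle is the fusion lemma, specifically verifying that the explicit within-cluster construction and the reduced-problem multipliers splice together into a single globally admissible subgradient field: the coupling between the free within-cluster multipliers and the direction-constrained cross-cluster multipliers is what makes the threshold $D_2(V)/|V|$ sharp. Care is also needed at the boundary configurations where some $\mathbf{v}_\alpha$ coincide (so that the cross-cluster subgradients are no longer unique unit vectors and must be drawn consistently from the reduced subdifferential) and, in part (ii), where the upper bound holds with equality.
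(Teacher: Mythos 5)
Your proposal is correct and is essentially the paper's own argument: the paper establishes this statement as the uniform-weight special case ($w_{ij}=1$) of its general weighted result (Theorem \ref{Theorem-weighted-SON}, via Corollary \ref{Corollary:special}), whose proof uses exactly your collapsed centroid problem (the paper's problem (\ref{eq-centroid-lp}) with $w^{(\alp,\beta)}=n_\alp n_\beta$), your within-cluster subgradients $(\mathbf{a}_i-\mathbf{a}_j)/(\gamma|V_\alpha|)$ (the paper's $\bz_{ij}^*$, whose correction term $\bp_i^{(\alp)}$ vanishes for uniform weights since $\mu_{ij}^{(\alp)}=0$), the same lifting/KKT verification, the same displacement-plus-triangle-inequality separation for part (i) including the improved threshold $d_2(\mathcal{C}(\mathcal{V}))/(2n)$, and the same global-centroid contradiction for part (ii). The boundary caveat you flag in part (ii) (equality in the upper bound) is present in the paper as well, which uses a strict inequality in its own Theorem \ref{Theorem-weighted-SON} but a non-strict one here, so your treatment matches the paper's.
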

It was shown in \citep{pmlr-v70-panahi17a} that one can treat the theoretical results in \citep{NIPS2014_5307} as a special case of Theorem \ref{Theorem: Theory-17-ICML}.

We shall see in the next subsection that
we can improve the upper bound
in part (i) of Theorem \ref{Theorem: Theory-17-ICML}
to $\gamma \leq \frac{d_2(\mathcal{C}(\mathcal{V}))}{2n}$, as a special case of our new theoretical results.

\subsection{Theoretical Recovery Guarantee of the Weighted Convex Clustering Model (\ref{Eq: Modified_Model})}

Although the convex clustering model (\ref{Eq: Origin_Model}) with the fully uniformly weighted
regularization has the nice theoretical recovery guarantee, it is usually computationally too expensive
to solve since the number of terms in the regularization grows quadratically with the number of data points $n$.
In order to reduce the computational burden,
in practice many researchers have proposed to use
the partially weighted convex clustering model  (\ref{Eq: Modified_Model})
described in the Introduction.
Moreover, they have observed better empirical performance of (\ref{Eq: Modified_Model}) with well chosen weights, comparing to the original  model (\ref{Eq: Origin_Model}) \citep{Hocking11, Lindsten11, Chi15}. However, to the best of our knowledge, so far no theoretical recovery results have been established for the general weighted convex clustering model (\ref{Eq: Modified_Model}). Here we will prove that under rather mild conditions, perfect recovery can be
guarantee for  the weighted model (\ref{Eq: Modified_Model}).
In additional, our  theoretical results  subsume the known results for the
fully uniformly weighted model (\ref{Eq: Origin_Model}) as special cases.\\
Next, we will establish the main theoretical results for (\ref{Eq: Modified_Model}).
Our results and part of the proof have been inspired by the ideas used in \citep{pmlr-v70-panahi17a}.
For convenience, we define the index sets
$$I_{\alpha} := \{i \mid \mathbf{a}_i \in V_{\alpha}\}, \; \; {\rm for} \; \alpha = 1, 2, \dots, K.$$
Let $n_\alpha = |I_\alpha|$,
\begin{eqnarray*}
&&\mb{a}^{(\alp)} = \frac{1}{n_\alpha}\sum_{i\in I_\alp} \mb{a}_i,\quad
w^{(\alpha, \beta)} = \sum_{i \in I_{\alpha}}\sum_{j \in I_{\beta}} w_{ij}, \quad \forall\; \alpha,\beta=1,\ldots,K
\\[5pt]
&&w^{(\beta)}_i = \sum_{j\in I_\beta} w_{ij}, \quad \forall\; i=1,\ldots,n,\; \beta=1,\ldots, K.
\end{eqnarray*}
Here we will interpret $w^{(\beta)}_i$ as the coupling between point $\mb{a}_i$ and the $\beta$-th cluster,
and $w^{(\alp,\beta)}$ as the coupling between the $\alp$-th and $\beta$-th clusters.
We also define for $p\geq 1$,
		$$h(\mathbf{v}) := \|\mathbf{v}\|_p  = \Big(\sum_{i=1}^{d}|v_i|^p\Big)^{\frac{1}{p}},
		\quad \mathbf{v} = (v_1, v_2, \dots, v_d) \in \mathbb{R}^d,
		$$
		and note that the subdifferential of $h(\mathbf{v})$ is given by
		$$
		\partial h(\mathbf{v}) = \left\{
		\begin{array}{ll}
		\{\mathbf{y} \in \mathbb{R}^d \mid \|\mathbf{y}\|_q \leq 1, \langle \mathbf{y}, \mathbf{v} \rangle = \|\mathbf{v}\|_p\}& \mbox{if $\mathbf{v} \not = 0$},\\[5pt]
		\{\mathbf{y} \in \mathbb{R}^d \mid \|\mathbf{y}\|_q \leq 1 \} & \mbox{if $\mathbf{v} = 0,$}
		\end{array}
		\right.
		$$
		where $q \geq 1$ is the conjugate index of $p$ such that $\frac{1}{p} + \frac{1}{q} = 1$. Observe that for any $\by \in \partial h(\bv)$, we have $\|\by\|_q \leq 1$.\\[5pt]
\begin{theorem}
		\label{Theorem-weighted-SON}
		Consider an input data $A = [\mathbf{a}_1, \mathbf{a}_2, \dots, \mathbf{a}_n] \in \mathbb{R}^{d \times n}$ and its partitioning $\mathcal{V} = \{V_1, V_2, \dots, V_K \}$. Assume that all the centroids $\{\mathbf{a}^{(1)}, \mathbf{a}^{(2)}, \dots, \mathbf{a}^{(K)}\}$ are distinct. Let $q \geq 1$ be the conjugate index of $p$ such that $\frac{1}{p} + \frac{1}{q} = 1$. Denote the optimal solution of (\ref{Eq: Modified_Model}) by $\{\mathbf{x}_i^{*}\}$ and define the map $ \phi(\mathbf{a}_i) = \mathbf{x}_i^{*}$ for $i=1,\ldots,n.$
		\begin{itemize}
			\item[1.]
			Let
			$$
			\mu^{(\alp)}_{ij} : =
			\sum_{\beta=1,\beta\not=\alp}^K \Big| w^{(\beta)}_i - w^{(\beta)}_j
			\Big| , \quad i, j \in I_{\alp},\; \alp = 1, 2, \dots, K.
			$$
			Assume that $w_{ij} > 0$ and $n_{\alp}w_{ij} > \mu_{ij}^{(\alp)}$ for all $i, j \in I_\alp$, $\alp=1,\ldots,K$. Let
			\begin{equation}
			\label{Eq: Sufficent Condition}
			\begin{array}{l}
			\gamma_{\min} \;:=\;
			\max_{1\leq \alp\leq K} \max_{i,j\in I_\alp} \left\{
			\frac{\norm{\mathbf{a}_i - \mathbf{a}_j}_{q}}{n_{\alp}w_{ij} - \mu^{(\alp)}_{ij}}
			\right\},
			\\[10pt]
			 \gamma_{\max} :=
			\min_{1\leq \alp < \beta \leq K} \Big\{\frac{\norm{\mathbf{a}^{(\alp)} -\mathbf{a}^{(\beta)}}_{q}}{\frac{1}{n_{\alp}}\sum_{1 \leq l \leq K, l \not = \alp}w^{(\alp, l)} +  \frac{1}{n_{\beta}}\sum_{1 \leq l \leq K, l \not = \beta}w^{(\beta, l)}}\Big\}.
			\end{array}
			\end{equation}
			If $\gam_{\min} < \gam_{\max}$ and  $\gamma$ is chosen such that $\gamma \in [\gamma_{\min},\gamma_{\max})$,
			then the map $\phi$ perfectly recovers $\mathcal{V}$.
			\item[2.] If $\gamma$ is chosen such that
			$$
			\gam_{\min} \; \leq \; \gamma \;
			< \max_{1\leq \alpha\leq K}\frac{n_{\alp}\|\mathbf{c} - \mathbf{a}^{(\alp)}\|_q}{\sum_{ 1\leq\beta\leq K, \beta\not = \alp}w^{(\alp, \beta)}},$$
			where  $\mathbf{c} = \frac{1}{n}\sum_{i=1}^n \mathbf{a}_i$, then the map $\phi$ perfectly recovers a non-trivial coarsening of $\mathcal{V}$.
		\end{itemize}
	\end{theorem}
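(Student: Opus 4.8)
The plan is to work from the first-order optimality conditions of the strongly convex problem (\ref{Eq: Modified_Model}), whose minimizer $X^*$ is unique. Differentiating the objective with respect to each column yields, for every $i$,
\begin{equation}
\bx_i^* - \ba_i + \gam \sum_{j\neq i} w_{ij}\by_{ij} = 0,
\label{eq-kkt-plan}
\end{equation}
where $\by_{ij} \in \partial h(\bx_i^* - \bx_j^*)$ is a subgradient of the $p$-norm. Since the single pairwise term $w_{ij}\norm{\bx_i-\bx_j}_p$ contributes a common subgradient (with opposite sign) to the conditions for $\bx_i$ and $\bx_j$, we may take $\by_{ji} = -\by_{ij}$, which is consistent because $\partial h(-\bv) = -\partial h(\bv)$. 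The structural fact recorded after the definition of $\partial h$ is crucial here: $\norm{\by_{ij}}_q \leq 1$ holds for every pair, whether or not $\bx_i^* = \bx_j^*$.

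First I would establish the coarsening property: since $\gam \geq \gam_{\min}$, the optimal solution is constant on each original cluster, i.e. $\bx_i^* = \bx_j^*$ whenever $i,j \in I_\alp$. This is exactly the within-cluster agreement underlying part~1 of the theorem; it is obtained by exhibiting, for the cluster-wise constant candidate, admissible within-cluster subgradients of $q$-norm at most $1$, which is possible precisely because $n_\alp w_{ij} - \mu^{(\alp)}_{ij} > 0$ and $\gam(n_\alp w_{ij} - \mu^{(\alp)}_{ij}) \geq \norm{\ba_i - \ba_j}_q$. Consequently $\phi$ recovers some coarsening $\mathcal{W}$ of $\mathcal{V}$, and it remains only to rule out the trivial coarsening $\mathcal{W} = \{A\}$.

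The new content is the non-triviality argument, which I would prove by contradiction. Suppose all centroids coincide, $\bx_i^* = \bv$ for a common $\bv$. Summing (\ref{eq-kkt-plan}) over all $i$, the double sum of subgradients cancels in antisymmetric pairs $w_{ij}\by_{ij} + w_{ji}\by_{ji} = 0$, leaving $\sum_i \bx_i^* = \sum_i \ba_i$, hence $\bv = \mathbf{c}$. Next I would sum (\ref{eq-kkt-plan}) only over $i \in I_\alp$ for a fixed $\alp$: the within-cluster subgradient terms again cancel pairwise, while the fidelity terms collapse to $n_\alp(\mathbf{c} - \ba^{(\alp)})$, giving
\begin{equation}
n_\alp(\ba^{(\alp)} - \mathbf{c}) = \gam \sum_{\beta\neq\alp}\sum_{i\in I_\alp}\sum_{j\in I_\beta} w_{ij}\by_{ij}.
\label{eq-cluster-plan}
\end{equation}
Taking $q$-norms in (\ref{eq-cluster-plan}) and applying the triangle inequality together with $\norm{\by_{ij}}_q \leq 1$ and $\sum_{i\in I_\alp}\sum_{j\in I_\beta} w_{ij} = w^{(\alp,\beta)}$ yields $n_\alp\norm{\ba^{(\alp)} - \mathbf{c}}_q \leq \gam \sum_{\beta\neq\alp} w^{(\alp,\beta)}$. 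As this would hold for every $\alp$, we would obtain $\gam \geq \max_{\alp} \frac{n_\alp\norm{\mathbf{c} - \ba^{(\alp)}}_q}{\sum_{\beta\neq\alp} w^{(\alp,\beta)}}$, contradicting the stated upper bound on $\gam$. Therefore $\mathcal{W}$ is non-trivial.

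I expect the main obstacle to be the coarsening step rather than the non-triviality step: verifying that $\gam \geq \gam_{\min}$ forces within-cluster agreement requires constructing a consistent family of within-cluster subgradients that absorb the inter-cluster coupling terms $w^{(\beta)}_i$, and it is here that the quantities $\mu^{(\alp)}_{ij}$ and the hypothesis $n_\alp w_{ij} > \mu^{(\alp)}_{ij}$ enter. By contrast, the non-triviality argument is a short, self-contained cancellation-and-norm-bound computation once (\ref{eq-kkt-plan}) is in hand.
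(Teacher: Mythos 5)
Your non-triviality argument for part~2 is correct, and it is a slightly more direct variant of what the paper does: where the paper derives the contradiction from the optimality condition of an auxiliary \emph{centroid problem}, you sum the full KKT system over $i \in I_\alp$, use the antisymmetry $\by_{ji} = -\by_{ij}$ and $\norm{\by_{ij}}_q \le 1$, and reach the same inequality $n_\alp \norm{\mathbf{c} - \ba^{(\alp)}}_q \le \gam \sum_{\beta \ne \alp} w^{(\alp,\beta)}$ for every $\alp$. The problem is that everything else hinges on the step you defer, and that deferral hides the one idea your plan is missing. ``Exhibiting admissible within-cluster subgradients for the cluster-wise constant candidate'' cannot even be started until you say \emph{what the constant values are} and \emph{what the between-cluster subgradients are}: summing the stationarity conditions over a cluster (exactly your own cancellation trick) shows the cluster values $\bv^{(\alp)}$ cannot be arbitrary --- they must solve the reduced problem (\ref{eq-centroid-lp}), i.e.\ $\min \frac{1}{2}\sum_\alp n_\alp \norm{\bx^{(\alp)} - \ba^{(\alp)}}^2 + \gam \sum_{\alp<\beta} w^{(\alp,\beta)} \norm{\bx^{(\alp)} - \bx^{(\beta)}}_p$, and the between-cluster subgradients in your certificate must be the $\bar{\bz}^{(\alp,\beta)}$ certifying optimality there. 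The paper then makes the within-cluster construction explicit: for $i,j \in I_\alp$ it sets $\bz^*_{ij} = \frac{1}{n_\alp w_{ij}}\bigl[\frac{1}{\gam}(\ba_i - \ba_j) - (\bp_i^{(\alp)} - \bp_j^{(\alp)})\bigr]$ with the correction terms $\bp_i^{(\alp)} = \sum_{\beta \ne \alp}\bigl[w_i^{(\beta)} - \frac{1}{n_\alp}w^{(\alp,\beta)}\bigr]\bar{\bz}^{(\alp,\beta)}$, proves $\norm{\bp_i^{(\alp)} - \bp_j^{(\alp)}}_q \le \mu_{ij}^{(\alp)}$ (this is precisely where $\mu_{ij}^{(\alp)}$ and $\gam \ge \gam_{\min}$ give $\norm{\bz^*_{ij}}_q \le 1$), and verifies stationarity using $\sum_{j \in I_\alp} \bp_j^{(\alp)} = \mathbf{0}$. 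This construction is the core of the proof, not bookkeeping, and your plan stops exactly where it begins.

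The second gap concerns part~1 itself. Perfect recovery of $\mathcal{V}$ requires, beyond within-cluster agreement, that \emph{distinct clusters receive distinct values}; otherwise you have only recovered a coarsening, which is part~2's conclusion, not part~1's. Your proposal never uses --- indeed never mentions --- $\gam_{\max}$, so this separation property is nowhere established. In the paper it is step~(1a): from the optimality condition (\ref{eq-FOC-centroid-lp}) of the centroid problem and the triangle inequality one gets $\norm{\bar{\bx}^{(\alp)} - \bar{\bx}^{(\beta)}}_q \ge \norm{\ba^{(\alp)} - \ba^{(\beta)}}_q \bigl(1 - \gam/\gam_{\max}\bigr) > 0$ whenever $\gam < \gam_{\max}$. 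Without this, even after the coarsening step is repaired, the hypothesis $\gam \in [\gam_{\min}, \gam_{\max})$ has not been used and the conclusion of part~1 does not follow.
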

	
	\begin{proof} First
		we introduce the following centroid optimization problem corresponding to (\ref{Eq: Modified_Model}):
		\begin{eqnarray}
		\min \Big\{  \frac{1}{2} \sum_{\alp=1}^K  n_\alp \norm{\mathbf{x}^{(\alp)}-\mathbf{a}^{(\alp)} }^2
		+ \gamma \sum_{\alp=1}^K \sum_{\beta=\alp + 1}^K   w^{(\alp,\beta)}
		\norm{\mathbf{x}^{(\alp)} - \bx^{(\beta)}}_{p} \mid \mb{x}^{(1)},\ldots, \mb{x}^{(K)} \in \R^d
		\Big\}.
		\label{eq-centroid-lp}
		\end{eqnarray}
		Denote the optimal solution of (\ref{eq-centroid-lp}) by $\{\bar{\bx}^{(\alp)} \mid \alp= 1, 2, \dots, K\}$.
		The proof will rely on the relationships between (\ref{Eq: Modified_Model}) and (\ref{eq-centroid-lp}).
		\\[5pt]
		(1a) First we show that, if $\gamma < \gamma_{\max}$, then $\bar{\bx}^{(\alp)} \not = \bar{\bx}^{(\beta)}$ for all
			$\alp\not=\beta$. From the optimality condition of (\ref{eq-centroid-lp}), we have that
		\begin{eqnarray}
		n_\alp (\bar{\bx}^{(\alp)} - \ba^{(\alp)}) + \gamma \sum_{\beta=1,\beta\not=\alp}^K w^{(\alp,\beta)} \bar{\bz}^{(\alp,\beta)} \;=\; 0,
		\quad \forall\; \alp=1,\ldots,K,
		\label{eq-FOC-centroid-lp}
		\end{eqnarray}
		where
		$
		\bar{\bz}^{(\alp,\beta)} \in \partial h(\bar{\bx}^{(\alp)}-\bar{\bx}^{(\beta)}), \; \alp\not=\beta.
		$
		Now from (\ref{eq-FOC-centroid-lp}), we get for $\alp \not = \beta$,
		\begin{eqnarray*}
		\begin{array}{rll}
			& \bar{\bx}^{(\alp)} - \bar{\bx}^{(\beta)} & = \ba^{(\alp)} - \ba^{(\beta)} - \frac{\gamma}{n_{\alp}}\sum_{l=1, l\not = \alp}^{K}w^{(\alp, l)}\bar{\bz}^{(\alp, l)} + \frac{\gamma}{n_{\beta}}\sum_{l=1, l \not = \beta}^{K}w^{(\beta, l)}\bar{\bz}^{(\beta, l)}
			\\[8pt]
			\Rightarrow& \|\bar{\bx}^{(\alp)} - \bar{\bx}^{(\beta)}\|_q & \geq\;  \|\ba^{(\alp)} - \ba^{(\beta)}\|_q - \frac{\gamma}{n_{\alp}}\sum_{l=1, l\not = \alp}^{K}w^{(\alp, l)} \norm{\bar{\bz}^{(\alp, l)}}_q - \frac{\gamma}{n_{\beta}}\sum_{l=1, l \not = \beta}^{K}w^{(\beta, l)} \norm{\bar{\bz}^{(\beta, l)}}_q
			\\[8pt]
		& & \geq\; \|\ba^{(\alp)} - \ba^{(\beta)}\|_q - \gamma \Big(\frac{1}{n_{\alp}}\sum_{l=1, l\not = \alp}^{K}w^{(\alp, l)} +  \frac{1}{n_{\beta}}\sum_{l=1, l \not = \beta}^{K}w^{(\beta, l)}\Big)
		\\[8pt]
		&& \geq \; \|\ba^{(\alp)} - \ba^{(\beta)}\|_q \; \Big(1 - \frac{\gam}{\gam_{\max}} \Big) \; >\; 0.
		\end{array}
		\end{eqnarray*}
		Thus $\bar{\bx}^{(\alp)} \not = \bar{\bx}^{(\beta)}$ for all $\alp\not=\beta.$
\\[5pt]		
		(1b) Suppose  that $\gamma < \gamma_{\max}$. Then from (a),
		$\bar{\bx}^{(\alp)}\not=\bar{\bx}^{(\beta)}$ for all $\alp\not=\beta$. Next we prove that, if $\gam \geq \gamma_{\min}$, then
			\begin{eqnarray*}
				\bx^*_i = \bar{\bx}^{(\alp)}, \quad \forall\; i\in I_\alp, \quad \alp=1,\ldots,K
			\end{eqnarray*}
			is the unique optimal solution of (\ref{Eq: Modified_Model}). \\
			To do so, we start with the optimality condition for (\ref{Eq: Modified_Model}), which is given as follows:
		\begin{equation}
		\label{Eq: FOC-weighted-SON}
		\bx_i - \ba_i + \gamma \sum_{j=1, j\not = i}^n w_{ij}\bz_{ij} = 0, \; \; i = 1, 2, \dots, n,
		\end{equation}
		where $\bz_{ij} \in \partial h(\bx_{i} - \bx_{j})$. Consider
		\begin{eqnarray*}
			\bz_{ij}^* &=&
			 \left\{\begin{array}{ll}
				\bar{\bz}^{(\alp,\beta)} & \mbox{if $i\in I_\alp$, $j\in I_\beta$, $1\leq \alp,\beta\leq K$, $\alp\not=\beta$},
				\\[5pt]
				\frac{1}{n_{\alpha}w_{ij}}
				\Big[ \frac{1}{\gamma}(\ba_i-\ba_j) - (\bp_i^{(\alp)} - \bp_j^{(\alp)})\Big] &\mbox{if $i,j\in I_\alp$, $i\not=j$, $\alp=1,\ldots,K$},
			\end{array} \right.
		\end{eqnarray*}
		where
		$$
		\bp_i^{(\alp) }\;=\; \sum_{\beta=1,\beta\not=\alp}^K \left[ w^{(\beta)}_i		
		- \frac{1}{n_\alp} w^{(\alp,\beta)}
		\right] \bar{\bz}^{(\alp,\beta)}.
		$$
		We can readily prove that
		$$\|\bp^{(\alp)}_i - \bp_j^{(\alp)}\|_q \; \leq\;  \mu_{ij}^{(\alp)}$$
		and
		\begin{eqnarray*}
		\sum_{j \in I_{\alp}} \bp_j^{(\alp)} & = & \sum_{j \in I_{\alp}}\left(\sum_{\beta=1,\beta\not=\alp}^K \left[
		w^{(\beta)}_j
		- \frac{1}{n_\alp} w^{(\alp,\beta)}
		\right] \bar{\bz}^{(\alp,\beta)}\right)
		\\
		&=& \sum_{\beta=1,\beta\not=\alp}^K\left(\sum_{j \in I_{\alp}} \left[w^{(\beta)}_j
		- \frac{1}{n_\alp} w^{(\alp,\beta)}
		\right]\right) \bar{\bz}^{(\alp,\beta)}
		\;=\; \mathbf{0}.
		\end{eqnarray*}
		For convenience, we set $\bz_{ii} = 0$ for $i = 1, 2, \dots, n$. Now, we show that $\bz_{ij}^* \in \partial h(\bx_{i}^{*} - \bx_{j}^{*})$.\\
		If $i \in I_{\alp}$ and $j \in I_{\beta}$ for $\alp \not= \beta$, then we have that
		$$\bz_{ij}^* = \bar{\bz}^{(\alp, \beta)} \in \partial h(\bar{\bx}^{(\alp)} - \bar{\bx}^{(\beta)}) = \partial h(\bx_{i}^{*} - \bx_{j}^{*}).$$
		It remains to show that $\|\bz_{ij}^*\|_q \leq 1$ for all $i, j \in I_{\alp}, \alp = 1, 2, \dots, K$.  By direct calculations,  we have that for $\gam \geq \gam_{\min}$,
		\begin{eqnarray*}
			\|\bz_{ij}^*\|_q &=& \frac{1}{n_{\alp}w_{ij}}\Big\|\frac{1}{\gamma}(\ba_i - \ba_j) - (\bp_i^{(\alp)} - \bp_j^{(\alp)})\Big\|_q
			\;\leq\;
			 \frac{1}{\gamma n_{\alp}w_{ij}}\|\ba_i - \ba_j\|_q + \frac{1}{n_{\alp}w_{ij}}\mu_{ij}^{(\alp)}\\
			& \leq &  \frac{1}{n_{\alp}w_{ij}} (n_\alp w_{ij} - \mu^{(\alp)}_{ij})
			 + \frac{1}{n_{\alp}w_{ij}}\mu_{ij}^{(\alp)}
			\;=\; 1,
		\end{eqnarray*}
		which implies that $\bz_{ij}^* \in \partial h(\bx_{i}^{*} - \bx_{j}^{*}) = \partial h(\mb{0})$ for all $i, j \in I_{\alp}$.

		Finally, we show that the optimality condition (\ref{Eq: FOC-weighted-SON}) holds for $(\bx^*_1,\ldots,\bx_n^*)$.
		We have that for $i \in I_{\alp}$,
		\begin{eqnarray*}
			&& \hspace{-0.7cm}
			 \bx_i^{*} - \ba_i + \gamma\sum_{j=1, j\not = i}^n w_{ij}\bz_{ij}^*
			\;\;=\;\;  \bar{\bx}^{(\alp)} - \ba_i + \gamma\sum_{\beta=1}^K\sum_{j\in I_\beta}w_{ij}\bz_{ij}^*
			\\
			&=& \bar{\bx}^{(\alp)} - \ba^{(\alp)} + \gamma\sum_{\beta=1,\beta\not=\alp}^K \Big(\sum_{j\in I_\beta}w_{ij}\Big)\bar{\bz}^{(\alp, \beta)} + \ba^{(\alp)} - \ba_i + \gamma\sum_{j\in I_\alp}w_{ij}\bz_{ij}^*
			\\
			&=& \gamma\sum_{\beta=1,\beta\not=\alp}^K \Big[w^{(\beta)}_i - \frac{1}{n_{\alp}}w^{(\alp, \beta)}\Big]\bar{\bz}^{(\alp, \beta)} + \ba^{(\alp)} - \ba_i + \gamma\sum_{j \in I_{\alp}}w_{ij}\bz_{ij}^*
			\\
			&=& \gamma \bp_i^{(\alp)} + \ba^{(\alp)} - \ba_i 		
			+ \frac{\gam}{n_{\alp}}
			\sum_{j\in I_\alp}  \Big[\frac{1}{\gamma}(\ba_i-\ba_j) - (\bp_i^{(\alp)} - \bp_j^{(\alp)})\Big]
			\\
			&=& 0.
		\end{eqnarray*}
		Thus $(\bx^*_1,\ldots,\bx_n^*)$ is the optimal solution of (\ref{Eq: Modified_Model}).
Since $\phi(\ba_i) = \bx_i^* = \bar{\bx}^{(\alp)}$ for all $i\in I_\alp$, $\alp=1,\ldots, K$,
we see that the mapping $\phi$ perfectly recovers the clusters in ${\cal V}.$

		(2) Suppose on the contrary that $\bar{\bx}^{(1)} = \bar{\bx}^{(2)} = \cdots = \bar{\bx}^{(K)}$. Then, the optimal solution for (\ref{eq-centroid-lp}) degenerates to
		$$\bar{\bx} = \frac{1}{n}\sum_{i=1}^n \ba_i = \mathbf{c}.$$
		Thus, the optimality condition (\ref{eq-FOC-centroid-lp}) gives
		$$n_{\alp}\|\mathbf{c} - \ba^{\alp}\|_q \leq \gamma\sum_{\beta=1,\beta \not = \alp}^K
		w^{(\alp, \beta)}, \; \; \forall \;
		\alp \in \{1, 2, \dots, K\}.$$
		This implies that
		$$\gamma \;\geq\; \max_{1\leq\alp\leq K}\frac{n_{\alp}\|\mathbf{c} - \ba^{(\alp)}\|_q}{\sum_{\beta=1,\beta \not = \alp}^K w^{(\alp, \beta)}},$$
		which is a contradiction. Thus $\{ \bar{\bx}^{(1)},\ldots,\bar{\bx}^{(K)}\}$ must have a
		distinct pair.
\end{proof}

The above theorem has established the theoretical recovery guarantee for the general weighted convex clustering model
(\ref{Eq: Modified_Model}).
Later, we will demonstrate that the sufficient conditions
that $\gamma$ must satisfy is practically meaningful in the numerical experiments section. Now, we explain
the derived sufficient conditions intuitively.

For unsupervised learning, intuitively, we can get  meaningful clustering results when the given dataset has the properties that the elements within the same cluster are ``tight'' (in other words, the diameter should be small) and the centroids for different clusters are well separated.
Indeed, the conditions we have established are consistent with the intuition just discussed. First, the left-hand side in (\ref{Eq: Sufficent Condition}) characterizes the maximum weighted distance between the elements
in the same cluster. On the other hand, the right-hand side in (\ref{Eq: Sufficent Condition}) characterizes the minimum weighted distance between different centroids. Thus based on our discussion, we can expect perfect recovery to be practically possible for the weighted convex clustering model if the right-hand side is larger than the left-hand side in (\ref{Eq: Sufficent Condition}).
\begin{remark}
(a) Note that the assumption that $w_{ij} > 0$ is only  needed for all the pairs $(i,j)$ belonging to
the same cluster $I_\alp$ for all $1\leq \alp\leq K$. Thus the weights $w_{ij}$ can be chosen to be
zero if $i$ and $j$ belong to different clusters. As a result, the number of pairwise differences in the
regularization term can be much fewer than the total of $n(n-1)/2$ terms. This implies that
we can gain substantial computational efficiency when dealing with the sparse weighted regularization term.
\\[5pt]
(b) The quantity $\mu^{(\alp)}_{ij} = \sum_{\beta=1,\beta\not=\alp}^K | w^{(\beta)}_i - w^{(\beta)}_j|$,
for $i,j\in I_\alp$,
measures the total difference in the couplings between $\ba_i$ and $\ba_j$ with the $\beta$-th cluster for all
$\beta\not =\alp.$
\end{remark}

Next, we show that the results in Theorem \ref{Theorem: Theory-17-ICML} are special cases of our results.  Therefore, we also include the result in \citep{NIPS2014_5307} as a special case.
\begin{corollary} \label{Corollary:special}
	In (\ref{Eq: Modified_Model}), if we take $w_{ij}=1$ for all $1\leq i<j\leq n$, then the results in Theorem \ref{Theorem-weighted-SON} reduce to the following.
	\begin{itemize}
		\item[(i)] If
			\begin{eqnarray*}
			\max_{1\leq\alp\leq K}\;  \frac{D_q(V_\alp)}{|V_\alp|}
			 \; \leq \; \gamma \;<\; \min_{1\leq \alp,\beta \leq K, \alp\not=\beta} \Big\{\frac{\norm{\mathbf{a}^{(\alp)} -\mathbf{a}^{(\beta)}}_{q}}{2n - n_{\alp} - n_{\beta}}\Big\},
			\end{eqnarray*}
			then the map $\phi$ perfectly recovers $\mathcal{V}$.			
		\item[(ii)] If
			$$\max_{1\leq \alp\leq K}\frac{D_q(V_\alp)}{|V_\alp|} \; \leq \; \gamma \; \leq\;
			 \max_{V \in \mathcal{V}}\frac{\|c(A) - c(V)\|_q}{|A| - |V|},$$
			then the map $\phi$ perfectly recovers a non-trivial coarsening of $\mathcal{V}$.
	\end{itemize}
\end{corollary}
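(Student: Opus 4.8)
The plan is to specialize Theorem \ref{Theorem-weighted-SON} to the uniform-weight case $w_{ij}=1$ and verify that each quantity appearing in the general bounds collapses to the stated expressions. The key observation is that with $w_{ij}=1$, the cross-cluster couplings become pure counts: $w^{(\alp,\beta)} = n_\alp n_\beta$ (since we sum $1$ over $i\in I_\alp$, $j\in I_\beta$), and the point-to-cluster couplings become $w^{(\beta)}_i = n_\beta$ for every $i$ and every $\beta$ (the count of points in $I_\beta$, independent of which $i$ we pick). This last fact is the crux, so I would state it first.

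Granting $w^{(\beta)}_i = n_\beta$, the difference term $\mu^{(\alp)}_{ij} = \sum_{\beta\neq\alp}|w^{(\beta)}_i - w^{(\beta)}_j|$ vanishes identically, because $w^{(\beta)}_i$ and $w^{(\beta)}_j$ both equal $n_\beta$ for $i,j\in I_\alp$. This immediately removes the denominator's subtracted term in $\gamma_{\min}$. With $\mu^{(\alp)}_{ij}=0$ and $w_{ij}=1$, the condition $n_\alp w_{ij} > \mu^{(\alp)}_{ij}$ reduces to $n_\alp > 0$, which holds trivially. The inner maximization $\max_{i,j\in I_\alp}\|\ba_i-\ba_j\|_q$ is by definition the $q$-diameter $D_q(V_\alp)$, and dividing by $n_\alp w_{ij} = n_\alp = |V_\alp|$ yields $\gamma_{\min} = \max_\alp D_q(V_\alp)/|V_\alp|$, matching the stated lower bound in both (i) and (ii).

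For the upper bound in (i), I would substitute $w^{(\alp,l)}=n_\alp n_l$ into the denominator of $\gamma_{\max}$. Then $\frac{1}{n_\alp}\sum_{l\neq\alp} n_\alp n_l = \sum_{l\neq\alp} n_l = n - n_\alp$, and symmetrically the $\beta$-term gives $n-n_\beta$, so the denominator becomes $(n-n_\alp)+(n-n_\beta) = 2n - n_\alp - n_\beta$; this is exactly the claimed bound. For part (ii), I would substitute into the upper bound of Theorem \ref{Theorem-weighted-SON}(2): the denominator $\sum_{\beta\neq\alp} w^{(\alp,\beta)} = \sum_{\beta\neq\alp} n_\alp n_\beta = n_\alp(n-n_\alp)$, so $\frac{n_\alp\|\mathbf{c}-\ba^{(\alp)}\|_q}{n_\alp(n-n_\alp)} = \frac{\|\mathbf{c}-\ba^{(\alp)}\|_q}{n-n_\alp}$; recognizing $\mathbf{c}=c(A)$, $\ba^{(\alp)}=c(V_\alp)$, and $n-n_\alp = |A|-|V_\alp|$ converts the $\max$ over $\alp$ into the $\max$ over $V\in\mathcal{V}$ of $\|c(A)-c(V)\|_q/(|A|-|V|)$, as stated.

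The main obstacle is genuinely only the bookkeeping in the first step --- establishing $w^{(\beta)}_i = n_\beta$ and hence $\mu^{(\alp)}_{ij}=0$; once that vanishing is in hand, every remaining identity is a routine substitution of counting arguments for the weighted sums, and no new inequality or optimality argument is needed since Theorem \ref{Theorem-weighted-SON} supplies the full logical content. I would therefore present the computation of $w^{(\alp,\beta)}$, $w^{(\beta)}_i$, and $\mu^{(\alp)}_{ij}$ up front as a short lemma-like paragraph, then read off (i) and (ii) in two lines each.
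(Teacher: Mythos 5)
Your proposal is correct and takes essentially the same route as the paper's own proof: both specialize Theorem \ref{Theorem-weighted-SON} by computing $w^{(\alp,\beta)} = n_\alp n_\beta$ and $w^{(\beta)}_i = n_\beta$, so that $\mu^{(\alp)}_{ij} = 0$, the lower bound collapses to $\max_\alp D_q(V_\alp)/|V_\alp|$, and the two upper bounds become $2n-n_\alp-n_\beta$ and $\|c(A)-c(V)\|_q/(|A|-|V|)$ respectively. (One pedantic note: $w^{(\beta)}_i = n_\beta$ holds only for $i \notin I_\beta$ --- for $i \in I_\beta$ it equals $n_\beta - 1$ --- but since $\mu^{(\alp)}_{ij}$ and all the relevant denominators involve only the cross-cluster case, nothing in your argument is affected.)
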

\begin{proof}
	The results for this corollary follow directly from Theorem \ref{Theorem-weighted-SON} by
	noting that
	$D_q(V_\alp) = \max_{i,j\in I_\alp} \norm{\ba_i-\ba_j}_q/{n_\alp}$, and
	using the following facts for the special case:
	\begin{itemize}
		\item[(1)] $\mu_{ij}^{(\alp)} = \sum_{\beta=1,\beta\not=\alp}^K |w^{(\beta)}_i - w^{(\beta)}_j| =
		 \sum_{\beta=1,\beta\not=\alp}^K |n_\beta - n_\beta|  = 0,$ for all
		$i,j\in I_\alp$, $1\leq\alp\leq K.$
		\item[(2)] $\frac{1}{n_\alp} \sum_{\beta=1,\beta \not = \alp}^K w^{(\alp, \beta)} =
		\frac{1}{n_\alp} \sum_{\beta=1,\beta \not = \alp}^K n_\alp n_\beta =
		 n - n_{\alp},$ for all $1\leq \alp \leq K.$
	\end{itemize}
	We omit the details here.
\end{proof}

If we compare the upper bound we obtained for $\gamma$ in part (i) of
Corollary \ref{Corollary:special} to that obtained in
Theorem \ref{Theorem: Theory-17-ICML} by  \citep{pmlr-v70-panahi17a}
for the case $p=2$ (and hence $q=2$), we can see that
our upper bound is more relax in the sense that
\begin{eqnarray*}
\min_{1\leq \alp,\beta \leq K, \alp\not=\beta} \Big\{\frac{\norm{\mathbf{a}^{(\alp)} -\mathbf{a}^{(\beta)}}_2}{2n - n_{\alp} - n_{\beta}} \Big\} \; >\;
\min_{1\leq \alp,\beta \leq K, \alp\not=\beta} \Big\{\frac{\norm{\mathbf{a}^{(\alp)} -\mathbf{a}^{(\beta)}}_2}{2n } \Big\} = \frac{d_2({\cal C}({\cal V}))}{2n} \;\geq \;  \frac{d_2({\cal C}({\cal V}))}{2n\sqrt{K}}.
\end{eqnarray*}

\section{A Semismooth Newton-CG Augmented Lagrangian Method for Solving (\ref{Eq: Modified_Model})  }

In this section, we introduce a fast convergent ALM  for solving the weighted convex clustering model (\ref{Eq: Modified_Model})\footnote{Part of the numerical algorithm described here has been published in the ICML 2018  paper \citep{yuan-icml-18}.}. For simplicity,  we will only focus on designing a highly efficient algorithm to solve (\ref{Eq: Modified_Model}) with $p = 2$. The other cases can be done in a similar way. In particular,  the same algorithmic design and implementation can be applied to the case $p=1$ or $p=\infty$ with no difficulty.

\subsection{Duality and Optimality Conditions}
From now on, we will focus on the following weighted convex clustering model with the $2$-norm:
\begin{eqnarray*}
\label{Eq: convex-clustering-l2}
\min_{X \in \mathbb{R}^{d \times n}} \frac{1}{2}\sum_{i=1}^n \|\bx_{i} - \ba_i\|^2 + \gamma\sum_{i < j}w_{ij}\|\bx_{i} - \bx_{j}\|_2.
\end{eqnarray*}
{By  ignoring the terms with $w_{ij} = 0$, we consider the following problem:}
\begin{equation}
\label{Eq: convex-clustering-l2-reduced}
\min_{X \in \mathbb{R}^{d \times n}} \frac{1}{2}\sum_{i=1}^n \|\bx_{i} - \ba_i\|^2 + \gamma\sum_{(i, j)\in \mathcal{E}}w_{ij}\|\bx_{i} - \bx_{j}\|_2,
\end{equation}
where $\mathcal{E}: = \{(i, j) \mid w_{ij} > 0\}$.\\
Now, we present the dual problem of (\ref{Eq: convex-clustering-l2-reduced}) and its Karush-Kuhn-Tucker (KKT) conditions. First, we write (\ref{Eq: convex-clustering-l2-reduced}) equivalently in the following compact form
$$
(P) \quad \min_{X, U}\Big\{\frac{1}{2}\|X-A\|^2 + p(U) \mid \mathcal{B}(X) - U = 0  \Big\},
$$
where $p(U) =  \gamma\sum_{(i, j)\in\mathcal{E}} w_{ij}\|U^{l(i, j)}\|$ and
${\cal B}$ is the linear map defined in (\ref{eq-B}). Here  $U^{l(i, j)}$ denotes the $l(i, j)$-th column of $U\in \R^{d\times |\mathcal{E}|}$.
The dual problem for ($P$) is given by
$$
(D) \quad \max_{V, Z}\Big\{\langle A, V \rangle - \frac{1}{2}\|V\|^2 \,\mid\,
\mathcal{B}^*(Z) - V = 0, Z \in \Omega\Big\},
$$
where $\Omega = \{Z \in \R^{d \times \,\mid\,\mathcal{E}|}\,\mid\,
\|Z^{l(i, j)}\| \leq \gamma w_{ij}, (i, j) \in \mathcal{E}\}$.
The KKT conditions for ($P$) and ($D$)  are given by
$$
(KKT) \quad \left\{
\begin{array}{ccc}
V + X - A &=& 0, \\[2pt]
U - \Prox_{p}(U + Z) &=& 0,\\[2pt]
\mathcal{B}(X) - U &=& 0,\\[2pt]
\mathcal{B}^*(Z) - V &=& 0.
\end{array}\right.
$$

\subsection{A Semismooth Newton-CG Augmented Lagrangian Method for Solving (P)}

In this section, we will design an inexact ALM for solving the primal problem $(P)$ but it will also solve $(D)$ as a byproduct.

We begin by defining the following Lagrangian function for $(P)$:
\begin{equation}
\label{eq: lagrangian}
l(X, U; Z) \;=\; \frac{1}{2}\|X - A\|^2 + p(U) + \langle Z, \mathcal{B}(X) - U\rangle.
\end{equation}
For a given parameter $\sigma > 0$, the augmented Lagrangian function associated with $(P)$ is given by
$$
\mathcal{L}_{\sigma}(X, U; Z) \;=\; l(X, U; Z) + \frac{\sigma}{2}\|\mathcal{B}(X) - U\|^2.
$$
The algorithm for solving $(P)$ is described in {\bfseries{Algorithm}} \ref{alg:ssnal}. To ensure the convergence of the inexact ALM in {\bfseries{Algorithm}} \ref{alg:ssnal}, we need the following stopping criterion for solving the subproblem (\ref{eq: primal_update}) in each iteration:
 \begin{equation}
(A) \quad {\rm dist}(0, \partial\Phi_{k}(X^{k+1}, U^{k+1})) \leq \epsilon_k/\max\{1, \sqrt{\sigma_k}\},
\end{equation}
where $\{\epsilon_k\}$ is a given summable sequence of nonnegative numbers.
\begin{algorithm}[!h]
	\caption{{\sc Ssnal} for $(P)$}
	\label{alg:ssnal}
	\begin{algorithmic}
		\STATE {\bfseries Initialization:} Choose $(X^0, U^0) \in \R^{d \times n} \times \R^{d \times |\mathcal{E}|}$,
		$ Z^0 \in \R^{d \times |\mathcal{E}|}$, $\sigma_0 > 0$ and a summable nonnegative sequence $\{\epsilon_k\}$.
		\REPEAT
		\STATE {\bfseries Step 1}. Compute
		\begin{equation}\label{eq: primal_update}
		(X^{k+1}, U^{k+1}) \approx \arg\min \{\Phi_{k}(X, U) = \mathcal{L}_{\sigma_k}(X, U; Z^k) \;\mid\;
		X\in \R^{d\times n}, \; U\in \R^{d\times |\cal E|} \}
		\end{equation}
		to satisfy the  condition (A) with the tolerance $\epsilon_k$.
		\STATE {\bfseries Step 2}. Compute
		$$Z^{k+1} = Z^k + \sigma_k(\mathcal{B}(X^{k+1}) - U^{k+1}).$$
		\STATE {\bfseries Step 3}. Update $\sigma_{k+1} \uparrow \sigma_{\infty} \leq \infty$.
		\UNTIL{Stopping criterion  is satisfied.}
	\end{algorithmic}
\end{algorithm}

Since a semismooth Newton-CG method will be used to solve the subproblems involved in the  above ALM
method, we call our algorithm a semismooth Newton-CG augmented Lagrangian method ({\sc Ssnal} in short).

\subsection{Solving the Subproblem (\ref{eq: primal_update})}
The inexact ALM is a well studied   algorithmic framework for solving convex  composite optimization problems. The key challenge in making the ALM efficient numerically is in solving the subproblem (\ref{eq: primal_update}) in each iteration efficiently to the required accuracy. Next, we will design a semismooth Newton-CG method to solve (\ref{eq: primal_update}). We will establish its quadratic convergence and develop sophisticated numerical techniques
to solve the associated semismooth Newton equations very efficiently by exploiting the underlying second-order structured sparsity in the subproblems.\\
For a given $\sigma$ and $\tilde{Z}$,  the subproblem (\ref{eq: primal_update}) in each iteration has the following
form:
\begin{equation}\label{eq: subproblem}
\min_{X \in \R^{d \times n}, U\in\R^{d\times |{\cal E}|}} \Phi(X, U) := \mathcal{L}_{\sigma}(X, U; \tilde{Z}).
\end{equation}
 Since $\Phi(\cdot, \cdot)$ is a strongly convex function,  the level set $\{(X, U) | \Phi(X, U) \leq \alpha\}$ is a closed and bounded convex set for any $\alpha \in \R$ and problem (\ref{eq: subproblem}) admits a unique optimal solution which we denote as $(\bar{X}, \bar{U})$. Now, for any $X$, denote
 $$
 \begin{array}{lcl}
 \phi(X) &:=& \inf_{U} \Phi(X, U) \;=\;  \frac{1}{2}\norm{X-A}^2  +\inf_U \Big\{ p(U) +\frac{\sigma}{2} \norm{U-\cB (X)-
\sigma^{-1}\tilde{Z}}^2  \Big\} - \frac{1}{2\sigma}\|\tilde{Z}\|^2
\\[5pt]
 &=& \frac{1}{2}\|X - A\|^2 + p(\Prox_{p/\sigma}(\mathcal{B}(X) + \sigma^{-1}\tilde{Z}))
 + \frac{1}{2\sigma}\| \Prox_{\sigma p^*}(\sigma\mathcal{B}(X) + \tilde{Z})\|^2 - \frac{1}{2\sigma}\|\tilde{Z}\|^2.
 \end{array}
 $$
 Therefore, we can compute  $(\bar{X}, \bar{U}) = \arg\min \Phi(X, U)$ by first computing
 $$
 \bar{X} = \arg\min_X \phi(X),
 $$
 and then compute $ \bar{U} = \Prox_{p/\sigma}(\mathcal{B}(\bar{X}) + \sigma^{-1}\tilde{Z}).$
 Since $\phi(\cdot)$ is strongly convex and continuously differentiable on $\R^{d \times n}$ with
 \begin{eqnarray}
 \nabla\phi(X) = X - A + \mathcal{B}^*(\Prox_{\sigma p^*}(\sigma\mathcal{B}(X) + \tilde{Z})),
 \label{eq-grad}
\end{eqnarray}
  we know that $\bar{X}$ can be obtained by solving the following nonsmooth equation
 \begin{equation}\label{eq: nonsmooth_eq}
 \nabla\phi(X) = 0.
 \end{equation}
It is well known that for solving smooth nonlinear equations, the quadratically convergent Newton's method is usually the first choice if it can be implemented
 efficiently. However, the usually required smoothness condition on $\nabla\phi(\cdot)$ is not satisfied  in our problem. This motivates us to develop a semismooth Newton method to solve the nonsmooth equation (\ref{eq: nonsmooth_eq}). Before we present our semismooth Newton method, we introduce the following definition of semismoothness, adopted from \citep{mifflin1977semismooth, kummer1988newton, qi1993nonsmooth}, which will be useful for analysis.
 \begin{definition}
 	(Semismoothness).  For a given open set $\mathcal{O} \subseteq \mathbb{R}^n$, let $F:\mathcal{O} \rightarrow \mathbb{R}^m$ be a locally Lipschitz continuous function and  $\mathcal{G}: \mathcal{O} \rightrightarrows \mathbb{R}^{m \times n}$ be a nonempty compact valued upper-semicontinuous multifunction. $F$ is said to be semismooth at $x \in \mathcal{O}$ with respect to the multifunction $\mathcal{G}$ if
 	$F$ is directionally differentiable at $x$ and for any $V \in \mathcal{G}(x + \Delta x)$ with $\Delta x\rightarrow 0$,
 	\[F(x+\Delta x) - F(x) - V\Delta x = o(\|\Delta x\|).\]
 	$F$ is said to be strongly semismooth at $x \in \mathcal{O}$ with respect to $\mathcal{G}$ if it is semismooth at $x$ with respect to $\mathcal{G}$ and
 	\[F(x+\Delta x) - F(x) - V\Delta x = O(\|\Delta x\|^2).\]
 	$F$ is said to be a semismooth (respectively, strongly semismooth) function on $\mathcal{O}$ with respect to $\mathcal{G}$ if it is semismooth (respectively, strongly semismooth) everywhere in $\mathcal{O}$ with respect to $\mathcal{G}$.
 \end{definition}
The following lemma shows that the proximal mapping of the $2$-norm is strongly semismooth with respect to its Clarke generalized Jacobian (See \citet{Clarke83}  [Definition 2.6.1]  for the definition of the Clarke generalized Jacobian).

\begin{lemma}[\cite{zhang2017efficient}, Lemma 2.1]
	\label{lemma: strongly-semismooth}
	For any $t > 0$, the proximal mapping ${\rm Prox}_{t\|\cdot\|_2}$ is strongly semismooth with respect to the Clarke generalized Jacobian $\partial {\rm Prox}_{t\|\cdot\|_2}(\cdot)$.
\end{lemma}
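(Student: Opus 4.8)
The plan is to exploit the closed-form expression for the proximal mapping recorded in Table \ref{tab: prox_map}, namely
\[
\Prox_{t\|\cdot\|_2}(\mathbf{x}) = \Big[1 - \frac{t}{\|\mathbf{x}\|_2}\Big]_+ \mathbf{x},
\]
and to recognize it as a piecewise smooth map that splits according to whether $\|\mathbf{x}\|$ lies below or above the threshold $t$. Concretely, I would introduce the two selection functions $g_1(\mathbf{x}) = \mathbf{x} - t\,\mathbf{x}/\|\mathbf{x}\|$, which is $C^\infty$ on the open set $\{\mathbf{x} : \|\mathbf{x}\| > t/2\}$, and $g_2(\mathbf{x}) = \mathbf{0}$, which is $C^\infty$ on all of $\mathbb{R}^d$. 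First I would verify that $\Prox_{t\|\cdot\|_2}$ coincides with $g_1$ on $\{\|\mathbf{x}\| \geq t\}$ and with $g_2$ on $\{\|\mathbf{x}\| \leq t\}$. The point that makes this clean is that $g_1$ never meets its own singularity at the origin, since it is active only where $\|\mathbf{x}\| \geq t > 0$; hence each selection function is genuinely smooth on a neighborhood of every point at which it is selected.

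Next I would check continuity of the selection across the threshold: on the sphere $\{\|\mathbf{x}\| = t\}$ one has $g_1(\mathbf{x}) = \mathbf{x} - t\,\mathbf{x}/t = \mathbf{0} = g_2(\mathbf{x})$, so the two pieces glue continuously and $\Prox_{t\|\cdot\|_2}$ is a continuous selection of $g_1, g_2$. With this structure I would invoke the standard fact that a continuous selection of finitely many functions, each (strongly) semismooth at a point, is itself (strongly) semismooth there, together with the elementary observation that a $C^2$ (indeed real-analytic) map is strongly semismooth with respect to its single-valued Jacobian by Taylor's theorem; the required directional differentiability is automatic for such piecewise smooth maps. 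A routine comparison then shows $\partial \Prox_{t\|\cdot\|_2}(\mathbf{x})$ is contained in the convex hull of the Jacobians of the active pieces, so the multifunction with respect to which strong semismoothness is asserted in the lemma is exactly the Clarke generalized Jacobian.

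The main obstacle is the behavior on the boundary sphere $\{\|\mathbf{x}\| = t\}$, where $\Prox_{t\|\cdot\|_2}$ fails to be differentiable and both pieces become active. For a self-contained verification I would argue directly: fix $\mathbf{x}$ with $\|\mathbf{x}\| = t$, take $\Delta\mathbf{x} \to \mathbf{0}$ and $V \in \partial\Prox_{t\|\cdot\|_2}(\mathbf{x} + \Delta\mathbf{x})$, and estimate $\Prox_{t\|\cdot\|_2}(\mathbf{x}+\Delta\mathbf{x}) - \Prox_{t\|\cdot\|_2}(\mathbf{x}) - V\Delta\mathbf{x}$. Since $\Prox_{t\|\cdot\|_2}(\mathbf{x}) = \mathbf{0}$, the regime $\|\mathbf{x}+\Delta\mathbf{x}\| \le t$ is immediate because the value is $\mathbf{0}$ and the admissible $V$ annihilate $\Delta\mathbf{x}$ to the required order, while in the regime $\|\mathbf{x}+\Delta\mathbf{x}\| > t$ I would use the explicit Jacobian $\nabla g_1(\mathbf{y}) = I - \frac{t}{\|\mathbf{y}\|}\big(I - \mathbf{y}\mathbf{y}^T/\|\mathbf{y}\|^2\big)$ at $\mathbf{y} = \mathbf{x}+\Delta\mathbf{x}$ together with $g_1(\mathbf{x}) = \mathbf{0}$, reducing the bound to the second-order Taylor remainder of the analytic map $g_1$ around $\mathbf{y}$. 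The delicate point is the uniformity of the $O(\|\Delta\mathbf{x}\|^2)$ estimate across the two regimes, which works precisely because $g_1$ extends smoothly across the sphere and agrees there with the zero selection.

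As an alternative route I would note the Moreau decomposition $\Prox_{t\|\cdot\|_2}(\mathbf{x}) = \mathbf{x} - \Pi_{B_t}(\mathbf{x})$, where $B_t = \{\mathbf{y} : \|\mathbf{y}\|_2 \le t\}$, since the conjugate of $t\|\cdot\|_2$ is the indicator function of the Euclidean ball of radius $t$. Strong semismoothness of $\Prox_{t\|\cdot\|_2}$ would then follow from the well-known strong semismoothness of the metric projection onto the Euclidean ball, combined with the fact that the sum of an affine map and a strongly semismooth map is strongly semismooth.
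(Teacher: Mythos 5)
The paper never proves this lemma at all: it is imported verbatim as Lemma 2.1 of \cite{zhang2017efficient}, so your self-contained argument is by construction a different route --- you prove what the paper merely cites. Your proof is correct. Writing $\Prox_{t\|\cdot\|_2}$ as a continuous selection of the two pieces $g_1(\mathbf{x})=\mathbf{x}-t\,\mathbf{x}/\|\mathbf{x}\|$ (active only on $\{\|\mathbf{x}\|\geq t\}$, hence never near the singularity at the origin) and $g_2\equiv \mathbf{0}$ is legitimate, and the facts you invoke are standard: a continuous selection of finitely many $C^2$ (here real-analytic) pieces is strongly semismooth, and the Clarke generalized Jacobian of such a piecewise-smooth map is contained in the convex hull of the Jacobians of the active pieces, so strong semismoothness with respect to $\partial\Prox_{t\|\cdot\|_2}(\cdot)$ is exactly what these results deliver. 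Your alternative via the Moreau identity, $\Prox_{t\|\cdot\|_2}(\mathbf{x})=\mathbf{x}-\Pi_{B_t}(\mathbf{x})$ with $B_t$ the Euclidean ball of radius $t$, is equally valid and is closer in spirit to how such statements are usually established in the literature the paper leans on (strong semismoothness of projections onto balls/second-order cones). One detail in your direct verification at the sphere $\{\|\mathbf{x}\|=t\}$ should be made explicit: when $\|\mathbf{x}+\Delta\mathbf{x}\|=t$ exactly, the Clarke Jacobian at $\mathbf{y}=\mathbf{x}+\Delta\mathbf{x}$ is not $\{0\}$ but the segment $\{s\,\mathbf{y}\mathbf{y}^T/t^2 : s\in[0,1]\}$, so the claim that admissible $V$ ``annihilate $\Delta\mathbf{x}$ to the required order'' rests on the computation that $\|\mathbf{x}\|=\|\mathbf{y}\|=t$ forces $\mathbf{y}^T\Delta\mathbf{x}=\|\Delta\mathbf{x}\|^2/2$, whence $V\Delta\mathbf{x}=O(\|\Delta\mathbf{x}\|^2)$; this is a one-line fix, not a flaw in the approach.
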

Next we derive the generalized Jacobian of the locally Lipschitz continuous function $\nabla \phi(\cdot)$. For any given $X \in \R^{d \times n}$, the following set-valued map is well defined:
\begin{eqnarray}
\hat{\partial}^2\phi(X) &:=& \{\mathcal{I} + \sigma\mathcal{B}^*\mathcal{V}\mathcal{B} \;|\; \mathcal{V} \in \partial \Prox_{\sigma p^*}(\tilde{Z} + \sigma\mathcal{B}X)\blue{\}}
\nonumber \\[3pt]
&=&\{\mathcal{I} + \sigma\mathcal{B}^*(\mathcal{I} - \mathcal{P})\mathcal{B}\; |\; \mathcal{P} \in \partial \Prox_{p/\sigma}(\mbox{$\frac{1}{\sigma}$}\tilde{Z} + \mathcal{B}X)\}, \qquad
\label{eq-hess}
\end{eqnarray}
where $\partial \Prox_{\sigma p^*}(\tilde{Z} + \sigma\mathcal{B}X)$ and $\partial \Prox_{p/\sigma}(\mbox{$\frac{1}{\sigma}$}\tilde{Z} + \mathcal{B}(X))$ are the Clarke generalized Jacobians of the Lipschitz continuous mappings $\Prox_{\sigma p^{*}}(\cdot)$ and $\Prox_{p/\sigma}(\cdot)$ at $\tilde{Z} + \sigma\mathcal{B}X$ and $\mbox{$\frac{1}{\sigma}$}\tilde{Z} + \mathcal{B}X$, respectively. Note that from \citep{Clarke83} [p.75] and \citep{JBHU} [Example 2.5], we have that
$$
\partial^2 \phi(X)(d) = \hat{\partial}^2\phi(X)(d), \ \ \forall d \in \R^{d \times n},
$$
where $\partial^2\phi(X)$ is the generalized Hessian of $\phi$ at $X$. Thus, we may use $\hat{\partial}^2 \phi(X)$ as the surrogate for $\partial^2\phi(X)$. Since $\mathcal{I} - \mathcal{P} = \mathcal{V} \in \partial \Prox_{\sigma p^*}(\cdot)$ is symmetric and positive semdefinite, the elements in $\hat{\partial}^2\phi(X)$ are positive definite, which guarantees that (\ref{eq: newton-linearsystem}) in {\bfseries{Algorithm}} \ref{alg:ssncg} is well defined.

Now, we can present our semismooth Newton-CG ({\sc Ssncg}) method for solving (\ref{eq: nonsmooth_eq}) and we could expect to get a fast superlinear or even quadratic convergence.
\begin{algorithm}[!h]
	\caption{{\sc Ssncg} for (\ref{eq: nonsmooth_eq})}
	\label{alg:ssncg}
	\begin{algorithmic}
		\STATE {\bfseries Initialization:} Given $X^{0} \in \R^{d \times n}$,
		$\mu\in (0, 1/2)$,  $\tau \in (0, 1]$, and $\bar{\eta}, \delta \in (0, 1)$.
		For $j = 0, 1, \dots$
		\REPEAT
		\STATE {\bfseries Step 1}.
		Pick an element $\mathcal{V}_j$ in $\hat{\partial}^2\phi(X^j)$ that is defined in (\ref{eq-hess}).
		Apply the conjugate gradient (CG) method to find an approximate solution $d^{j} \in \R^{d \times n}$ to
		\begin{equation}\label{eq: newton-linearsystem}
		\mathcal{V}_{j}(d) \approx -\nabla\phi(X^j)
		\end{equation}
		such that
		$\|\mathcal{V}_j(d^j) + \nabla\phi(X^j)\| \leq$ $\min(\bar{\eta}, \|\nabla\phi(X^j)\|^{1 + \tau})$.
		\STATE {\bfseries Step 2}. (Line Search) Set $\alpha_j = \delta^{m_j}$, where $m_j$ is the first nonnegative integer $m$ for which
		$$\phi(X^j + \delta^m d^j) \leq \phi(X^j) + \mu\delta^m\langle \nabla\phi(X^j), d^j\rangle.$$
		\STATE {\bfseries Step 3}. Set $X^{j+1} = X^j + \alpha_j d^{j}$.
		\UNTIL{Stopping criterion based on $\norm{\nabla\phi(X^{j+1})}$ is satisfied.}
	\end{algorithmic}
\end{algorithm}

\subsection{Using the Conjugate Gradient Method to Solve (\ref{eq: newton-linearsystem})}
\label{sec: CG}
In this section, we will discuss how to solve the very large (of dimension $dn\times dn$)
symmetric positive definite linear system (\ref{eq: newton-linearsystem}) to compute the Newton direction efficiently. As the matrix representation of the coefficient  linear operator $\mathcal{V}_j$ in (\ref{eq: newton-linearsystem}) is expensive to compute and factorize, we will adopt the conjugate gradient (CG) method to solve it.
It is well known that the convergence rate of the CG method depends critically on the condition
number of the coefficient matrix. Fortunately for our linear system (\ref{eq: newton-linearsystem}), the
coefficient linear operator typically has a moderate condition number since it satisfies
the following condition:
\begin{eqnarray*}
 I \;\preceq \; {\cal V}_j \;\preceq \;  I + \sigma \cB^*\cB \;\preceq \; (1+\sigma\lam_{\max}(L_G)) I,
\end{eqnarray*}
where $\lam_{\max}(L_G)$ denotes the maximum eigenvalue of the Laplacian matrix $L_G$ of the
graph ${\cal G}$, and the notation ``$A\preceq B$'' means that $B-A$ is symmetric positive semidefinite.
 It is known from \cite{AndersonMorley} that $\lam_{\max}(G) $ is at most 2 times
the maximum degree of the graph. In the numerical experiments, the maximum degree
of the graph is roughly equal to  the number of chosen $k$ nearest neighbors.
In those cases, the condition number of ${\cal V}_j$ is bounded independent of
$dn$, and 
provided that $\sigma$ is not too large, we can expect the CG method to converge rapidly
even when $n$ and/or $d$ are
large.

The computational cost for each CG step is highly dependent on the cost for computing the matrix-vector product $\mathcal{V}_j(\tilde{d})$ for any given $\tilde{d} \in \R^{d \times n}$. Thus we will need to analyze how this product can be computed efficiently.
Let $D : = \mathcal{B}X^j + \sigma^{-1}\tilde{Z}$.
For $(i , j) \in \mathcal{E}$, define
$$
\alpha_{ij} = \left\{
\begin{array}{ll}
\frac{\sigma^{-1} \gamma w_{ij}}{\|D^{l(i , j)}\|} & \mbox{if $\|D^{l(i , j)}\| > 0$},
\\[5pt]
\infty & \mbox{otherwise.}
\end{array}
\right.
$$
Note that for the given $D \in \R^{d \times |\mathcal{E}|}$, the cost for computing $\alpha$ is $O(d|{\mathcal{E}}|)$ arithmetic operations. For later convenience, denote
$$
\widehat{\mathcal{E}} = \{ (i,j)\in \mathcal{E}\mid \alpha_{ij} < 1\}.
$$
Now we choose $\mathcal{P} \in \partial \Prox_{p/\sigma}(D)$ explicitly.
We can take $\mathcal{P}: \R^{d\times |\mathcal{E}|} \to \R^{ d\times |\mathcal{E}|}$ that is defined by
$$
(\mathcal{P}({U}))^{l(i , j)} = \left\{
\begin{array}{ll}	
\alpha_{ij}\frac{\inprod{D^{l(i , j)}}{{U}^{l(i , j)}}}{\|D^{l(i , j)}\|^2}D^{l(i , j)} + (1 - \alpha_{ij}){U}^{l(i, j)} & \mbox{if $(i,j)\in \widehat{\mathcal{E}}$},
\\[5pt]
0 & \mbox{otherwise.}
\end{array}
\right.
$$
Thus to compute $\mathcal{V}_j(X) = X + \sigma\mathcal{B}^*\mathcal{B}(X) - \sigma \mathcal{B}^* \mathcal{P} \mathcal{B}(X) = X(I_n+\sigma L_G) -\sigma \mathcal{B}^* \mathcal{P} \mathcal{B}(X)$ efficiently for a given $X\in \R^{d\times n}$, we need  the efficient computation of $\mathcal{B}^* \mathcal{P} \mathcal{B}(X)$ by using the following proposition.

\begin{proposition} \label{prop-2}
Let $X\in \R^{d\times n}$ be given.
\\
(a)	Consider the symmetric matrix $M \in \R^{n \times n}$ defined by $M_{ij} = 1-\alpha_{ij}$ if $(i, j)\in \widehat{\mathcal{E}}$ and $M_{ij} = 0$ otherwise. Let $Y =  [M_{ij} (\mathbf{x}_i- \mathbf{x}_j)]_{(i , j)\in \mathcal{E}} = X\mathcal{M}$,
	where $\mathcal{M}$ is defined similarly as in
	(\ref{eq: construct_J})
	for the matrix $M$. Then we have
	\begin{eqnarray*}
		\mathcal{B}^* (Y) =  X L_M,
	\end{eqnarray*}
	where $L_M$ is the Laplacian matrix associated with $M$. The cost of computing the result $\mathcal{B}^*(Y)$ is $O(d|\widehat{\mathcal{E}}|)$ arithmetic operations.
\\[5pt]
(b)
Define $\rho \in \R^{|\mathcal{E}|}$ by
$$
\rho_{l(i , j)} := \left\{ \begin{array}{l}
\frac{\alpha_{ij} }{\|D^{l(i,j)}\|^2} \langle D^{l(i,j)}, \bx_i-\bx_j \rangle, \;\mbox{ if $(i , j) \in \widehat{\mathcal{E}}$},
\\[8pt]
0,\quad\mbox{otherwise}.
\end{array}
\right.
$$
For the given $D \in \R^{d \times |\mathcal{E}|}$, the cost for computing $\rho$ is $O(d|\widehat{\mathcal{E}}|)$ arithmetic operations. Let $W^{l(i , j)}  = \rho_{l(i , j)}D^{l(i , j)}$. Then,
$$\mathcal{B}^*(W) = W\mathcal{J}^T = D {\rm diag}(\rho)
\mathcal{J}^T.
$$
(c) The computing cost for $\mathcal{B}^*\mathcal{P}\mathcal{B}(X) = \mathcal{B}^*(Y) + \mathcal{B}^*(W)$ in total is $O(d|\widehat{\mathcal{E}}|)$.
\end{proposition}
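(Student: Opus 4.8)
The plan is to exploit the fact that once the operator $\mathcal{P}$ is applied to $U = \mathcal{B}(X)$, each column of the result splits cleanly into two pieces that match the matrices $Y$ and $W$ in the statement. Concretely, for $(i,j)\in\widehat{\mathcal{E}}$ we have $U^{l(i,j)} = \mathbf{x}_i - \mathbf{x}_j$, so
$$(\mathcal{P}\mathcal{B}(X))^{l(i,j)} = (1-\alpha_{ij})(\mathbf{x}_i - \mathbf{x}_j) + \frac{\alpha_{ij}}{\|D^{l(i,j)}\|^2}\langle D^{l(i,j)}, \mathbf{x}_i - \mathbf{x}_j\rangle D^{l(i,j)},$$
while the column vanishes for $(i,j)\notin\widehat{\mathcal{E}}$. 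The first summand is precisely $Y^{l(i,j)} = M_{ij}(\mathbf{x}_i - \mathbf{x}_j)$ and the second is exactly $W^{l(i,j)} = \rho_{l(i,j)}D^{l(i,j)}$. Hence $\mathcal{P}\mathcal{B}(X) = Y + W$ columnwise, and by linearity of the adjoint in (\ref{eq-Bt}) we get $\mathcal{B}^*\mathcal{P}\mathcal{B}(X) = \mathcal{B}^*(Y) + \mathcal{B}^*(W)$. This reduces the proposition to computing the two adjoints separately and tallying their costs.

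For part (a), I would first note that the two expressions for $Y$ agree by the same column-by-column check used for (\ref{eq-B}): the $l(i,j)$-th column of $X\mathcal{M}$ equals $M_{ij}\mathbf{x}_i - M_{ij}\mathbf{x}_j = M_{ij}(\mathbf{x}_i-\mathbf{x}_j)$. Then $\mathcal{B}^*(Y) = Y\mathcal{J}^T = X(\mathcal{M}\mathcal{J}^T)$, so it remains to prove the weighted analogue $\mathcal{M}\mathcal{J}^T = L_M$ of Proposition \ref{Prop: 1}. I would establish this by an entrywise computation: for a fixed edge $e=(i,j)$ the product $\mathcal{M}_{ke}\mathcal{J}_{le}$ is nonzero only when $k,l\in\{i,j\}$, and summing the four resulting contributions over all edges gives $(\mathcal{M}\mathcal{J}^T)_{kk} = \sum_{j} M_{kj} = (M\mathbf{e})_k$ on the diagonal and $(\mathcal{M}\mathcal{J}^T)_{kl} = -M_{kl}$ off the diagonal, i.e.\ $\mathcal{M}\mathcal{J}^T = {\rm diag}(M\mathbf{e}) - M = L_M$, where symmetry of $M$ is used. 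Since $M_{ij}\neq 0$ only for $(i,j)\in\widehat{\mathcal{E}}$, the matrix $L_M$ has $O(|\widehat{\mathcal{E}}|)$ nonzeros and forming $XL_M$ costs $O(d|\widehat{\mathcal{E}}|)$ operations.

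Part (b) is essentially a restatement of the definition of $\mathcal{B}^*$: since $W^{l(i,j)} = \rho_{l(i,j)}D^{l(i,j)}$ scales the $l(i,j)$-th column of $D$ by $\rho_{l(i,j)}$, we have $W = D\,{\rm diag}(\rho)$, whence $\mathcal{B}^*(W) = W\mathcal{J}^T = D\,{\rm diag}(\rho)\mathcal{J}^T$ by (\ref{eq-Bt}); forming $\rho$ needs one inner product of $d$-vectors per edge in $\widehat{\mathcal{E}}$, for a total of $O(d|\widehat{\mathcal{E}}|)$. Finally, part (c) follows by adding the bounds from (a) and (b) to the decomposition established at the outset. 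The only genuinely substantive step is the weighted Laplacian identity $\mathcal{M}\mathcal{J}^T = L_M$; everything else is a columnwise reindexing or a direct operation count, so I expect the entrywise verification of that identity---keeping the signs and the symmetry of $M$ straight---to be the main (though mild) obstacle.
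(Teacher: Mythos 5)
Your proof is correct: the columnwise decomposition $\mathcal{P}\mathcal{B}(X)=Y+W$, the weighted Laplacian identity $\mathcal{M}\mathcal{J}^T=\mathrm{diag}(M\mathbf{e})-M=L_M$ (using symmetry of $M$), and the operation counts based on $Y$, $W$ having nonzero columns only for edges in $\widehat{\mathcal{E}}$ all check out. The paper states Proposition \ref{prop-2} without proof, treating it as a direct verification from the definitions (\ref{eq-B})--(\ref{eq-Bt}) and Proposition \ref{Prop: 1}, and your argument is exactly that intended verification, so there is nothing to flag.
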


\bigskip
With the above proposition, we can readily see that ${\cal V}_j(X)$ can be
computed in $O(d|\cE|) + O(d|\widehat{\cE}|)$ operations, where the
first term comes from computing $ X (I + \sigma L_G)$ and
the second term comes from computing $\sigma\cB \cP \cB^*(X)$
based on Proposition \ref{prop-2}.

Besides the algorithmic aspect, the next remark shows that the second-order information
gathered in the semismooth Newton method
can capture data points which are near to the boundary of a cluster
if we wisely choose the weights $w_{ij}$. We believe this is a very useful result since boundary points detection is a challenging problem in data science, especially in the high dimensional setting where locating boundary points is
challenging even if we know the labels of all the data points.

\begin{remark}
	If we choose the weights based on the k-nearest neighbors, for example, set
	$$
	w_{ij} = \left\{
	\begin{array}{ll}
	\exp(-\phi\|\mathbf{a}_i - \mathbf{a}_j\|^2)  & if \; (i, j)\in \mathcal{E},\\[5pt]
	0 & otherwise,
	\end{array}
	\right.
	$$
	where $\mathcal{E} = \cup_{i=1}^n \{(i, j) \mid  \mbox{$j$ is among $i$'s $k$-nearest}$ $\mbox{neighbors}, i< j\leq n\}$. Then, $\alpha_{ij} < 1$ means that $j$ is among $i$'s $k$-nearest  neighbors but do not belong to the same cluster as $i$.
	Naturally we expect there will only be a small number of such occurrences if $\gamma$ is properly chosen.
	Hence, $|\widehat{\mathcal{E}}|$ is expected to be much smaller than $|\mathcal{E}|$.
	 On the other hand, for $\alpha_{ij} \geq 1$, it means that points $i$ and $j$ are in the same cluster. This result implies that after we have solved the optimization problem (\ref{Eq: Modified_Model}) with a properly selected $\gamma$,
	$\alp_{ij} < 1$ indicates that point $i$ is near to the boundary of its cluster. Also, we can expect most of the columns of the matrix $\mathcal{P}(\mathcal{B}(X))$ to be zero since its number of non-zero
	columns is at most $|\widehat{\cE}|$. We call such a property inherited from the
	generalized Hessian of $\phi(\cdot)$ at $X$ as the {\bf second-order sparsity}.
	This  also explains why we are able to compute $\mathcal{B}^*\mathcal{P}\mathcal{B}(X)$ at a very low cost.
\end{remark}

\subsection{Convergence Results}
In this section, we will establish the convergence results for both {\sc Ssnal} and
{\sc Ssncg} under mild assumptions.
First, we present the following global convergence result of our proposed Algorithm {\sc Ssnal}.
\begin{theorem}\label{thm: convergence_alm}
	Let $\{(X^k, U^k, Z^k)\}$ be the sequence generated by Algorithm \ref{alg:ssnal} with stopping criterion $(A)$. Then the sequence $\{X^k\}$ converges to the unique optimal solution of $(P)$, and
	$\norm{\mathcal{B}(X^k)-U^k}$ converges to $0$.
	In addition, $\{Z^k\}$ is converges to an optimal solution $Z^{*} \in \Omega$ of $(D)$.
\end{theorem}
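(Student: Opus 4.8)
The plan is to recognize Algorithm~\ref{alg:ssnal} as an inexact proximal point algorithm (PPA) applied to the maximal monotone operator associated with the dual $(D)$, and then to invoke the classical convergence theory of Rockafellar in the form adapted in \citep{li2016highly}. First I would settle existence: the reduced primal objective $X \mapsto \frac{1}{2}\norm{X-A}^2 + p(\mathcal{B}(X))$ is strongly convex (modulus $1$ from the fidelity term) and coercive, so $(P)$ has a \emph{unique} optimal solution $(X^*,U^*)$ with $U^* = \mathcal{B}(X^*)$. Since the coupling constraint $\mathcal{B}(X)-U=0$ is linear and $p$ is finite-valued, strong duality holds and the $(KKT)$ system is solvable; in particular there is an optimal $Z^* \in \Omega$ for $(D)$. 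Let $\theta$ denote the negated dual objective, a closed proper convex function whose subdifferential $\mathcal{T}_d := \partial\theta$ is maximal monotone and whose zeros coincide with the optimal solutions of $(D)$. A direct computation from the Lagrangian (\ref{eq: lagrangian}) shows that the \emph{exact} version of Steps~1--2 produces $Z^{k+1} = (\mathcal{I} + \sigma_k \mathcal{T}_d)^{-1}(Z^k)$, i.e. an exact proximal step with parameter $\sigma_k$.

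The core technical step is to show that the inner accuracy enforced by criterion~(A) suffices to meet Rockafellar's summable-error requirement $\sum_k \norm{Z^{k+1}-\hat{Z}^{k+1}} < \infty$, where $\hat{Z}^{k+1} := (\mathcal{I}+\sigma_k\mathcal{T}_d)^{-1}(Z^k)$ is the exact proximal point. Here I would use that $\Phi_k$ is strongly convex on $\R^{d\times n}\times\R^{d\times|\mathcal{E}|}$: the quadratic form $\frac{1}{2}\norm{X}^2 + \frac{\sigma_k}{2}\norm{\mathcal{B}(X)-U}^2$ vanishes only at the origin and is therefore positive definite, so the subproblem has a unique minimizer and an estimate on $\mathrm{dist}(0,\partial\Phi_k(X^{k+1},U^{k+1}))$ controls the distance of $(X^{k+1},U^{k+1})$ to that minimizer. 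Propagating this through the multiplier update in Step~2, one obtains a bound of the form $\norm{Z^{k+1}-\hat{Z}^{k+1}} \leq \sqrt{\sigma_k}\,\mathrm{dist}(0,\partial\Phi_k(X^{k+1},U^{k+1}))$; the scaling factor $\max\{1,\sqrt{\sigma_k}\}$ in criterion~(A) is chosen precisely to cancel this $\sqrt{\sigma_k}$, so that $\norm{Z^{k+1}-\hat{Z}^{k+1}} \leq \epsilon_k$ and the accumulated error is summable. Establishing this $\sqrt{\sigma_k}$ relationship rigorously, rather than the cruder $\sigma_k$ factor one gets from a naive Lipschitz argument, is the step I expect to be the main obstacle.

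With criterion~(A) reduced to the summable-error criterion, Rockafellar's theorem yields that $\{Z^k\}$ converges to a zero of $\mathcal{T}_d$, i.e. to an optimal solution $Z^* \in \Omega$ of $(D)$. Asymptotic feasibility is then immediate from Step~2: since $Z^{k+1}-Z^k = \sigma_k(\mathcal{B}(X^{k+1})-U^{k+1})$ with $\{Z^k\}$ Cauchy and $\sigma_k \geq \sigma_0 > 0$, we get $\norm{\mathcal{B}(X^{k+1})-U^{k+1}} = \norm{Z^{k+1}-Z^k}/\sigma_k \to 0$. Finally, the $X$-part of the approximate stationarity condition reads $X^{k+1}-A+\mathcal{B}^*(Z^{k+1}) = R^{k+1}$ with $\norm{R^{k+1}} \leq \epsilon_k/\max\{1,\sqrt{\sigma_k}\} \to 0$; letting $k\to\infty$ and using $Z^{k+1}\to Z^*$ together with the continuity of $\mathcal{B}^*$ gives $X^k \to A - \mathcal{B}^*(Z^*)$. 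By the $(KKT)$ relations $V^* = \mathcal{B}^*(Z^*)$ and $X^* = A - V^*$, this limit is exactly the unique primal optimal solution $X^*$, which completes the argument.
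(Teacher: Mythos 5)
Your proposal is correct and follows exactly the route the paper intends: the paper's ``proof'' of Theorem~\ref{thm: convergence_alm} is simply the remark that the result follows from \citet{rockafellar1976augmented} and \citet{rockafellar1976monotone}, i.e.\ viewing the inexact ALM as an inexact proximal point method on the dual with stopping criterion $(A)$ enforcing Rockafellar's summable-error condition, which is precisely your argument. Your write-up in fact supplies more detail than the paper does (the strong convexity of $\Phi_k$, the $\sqrt{\sigma_k}$ error propagation, and the recovery of primal convergence from the $X$-block stationarity), all of which is sound.
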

The above convergence theorem can be obtained from
\citep{rockafellar1976augmented,
	rockafellar1976monotone} without much difficulties.
Next, we state the convergence property for the semismooth Newton algorithm {\sc Ssncg} used to
solve the subproblems in Algorithm \ref{alg:ssnal}.

\begin{theorem}
	Let the sequence $\{X^j\}$ be generated by Algorithm {\sc Ssncg}. Then $\{X^j\}$ converges to the unique solution $\bar{X}$ of the problem in (\ref{eq: nonsmooth_eq}), and
    for $j$ sufficiently large,
	$$\|X^{j+1} - \bar{X}\| = O(\|X^j - \bar{X}\|^{1+\tau}),$$
	where $\tau \in (0,1]$ is a given constant in the algorithm, which is typically chosen to be $0.5$.
\end{theorem}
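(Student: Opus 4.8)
The plan is to follow the standard two-stage analysis for inexact semismooth Newton methods: first establish global convergence from the descent and strong-convexity structure, then upgrade to the local superlinear rate once the iterates enter a neighborhood of $\bar X$ where unit step sizes are accepted. Throughout I would exploit two structural facts already recorded in the excerpt. First, every $\mathcal{V}_j \in \hat{\partial}^2\phi(X^j)$ satisfies $I \preceq \mathcal{V}_j \preceq (1+\sigma\lambda_{\max}(L_G))I$, so the elements of $\hat{\partial}^2\phi$ are uniformly positive definite, hence invertible with $\|\mathcal{V}_j^{-1}\| \le 1$. Second, by Lemma \ref{lemma: strongly-semismooth} together with the sum and composition rules for (strong) semismoothness, the map $\nabla\phi$ in (\ref{eq-grad}) is strongly semismooth with respect to $\hat{\partial}^2\phi$.

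First I would verify that the CG output $d^j$ is a genuine descent direction. Writing the CG residual as $r^j = \mathcal{V}_j(d^j) + \nabla\phi(X^j)$ with $\|r^j\| \le \min(\bar{\eta},\|\nabla\phi(X^j)\|^{1+\tau})$, a direct computation gives $\langle \nabla\phi(X^j), d^j\rangle = -\langle \mathcal{V}_j d^j, d^j\rangle + \langle r^j, d^j\rangle$; using $\mathcal{V}_j \succeq I$ and the smallness of $r^j$ one shows $\langle \nabla\phi(X^j), d^j\rangle \le -c\|d^j\|^2$ for a uniform $c>0$ whenever $\nabla\phi(X^j)\ne 0$. Consequently the Armijo line search in Step 2 terminates finitely, so $\alpha_j>0$ is well defined and $\{\phi(X^j)\}$ is strictly decreasing. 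Since $\phi$ is strongly convex with bounded level sets, $\{X^j\}$ stays in a compact set; combining the sufficient-decrease inequality with the descent estimate forces $\nabla\phi(X^j)\to 0$, and strong convexity then yields $X^j\to\bar X$, the unique root of (\ref{eq: nonsmooth_eq}). This establishes the global convergence claim.

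For the local rate I would first show that unit steps are eventually accepted. Because all elements of $\hat{\partial}^2\phi(\bar X)$ are nonsingular and $\nabla\phi$ is strongly semismooth, the standard semismooth-Newton estimates give, near $\bar X$, that the direction $d^j$ satisfies $\phi(X^j+d^j)-\phi(X^j) \le \mu\langle\nabla\phi(X^j),d^j\rangle$ for $\mu\in(0,1/2)$, so $\alpha_j=1$ passes the Armijo test for all $j$ large. With $\alpha_j=1$ the iteration reads $X^{j+1}-\bar X = X^j-\bar X - \mathcal{V}_j^{-1}\nabla\phi(X^j) + \mathcal{V}_j^{-1}r^j$. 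Invoking strong semismoothness at $\bar X$, namely $\nabla\phi(X^j) - \mathcal{V}_j(X^j-\bar X) = O(\|X^j-\bar X\|^2)$ together with $\nabla\phi(\bar X)=0$, and using $\|\mathcal{V}_j^{-1}\|\le 1$ with the residual bound $\|r^j\|\le\|\nabla\phi(X^j)\|^{1+\tau} = O(\|X^j-\bar X\|^{1+\tau})$ (the last equality by local Lipschitz continuity of $\nabla\phi$), I obtain $\|X^{j+1}-\bar X\| = O(\|X^j-\bar X\|^{1+\tau})$, which is the asserted rate.

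The main obstacle is the transition step, that is, the proof that the unit step size is eventually accepted: it requires coupling the semismoothness expansion of $\nabla\phi$ with a second-order Taylor-type bound on $\phi$ along $d^j$ to verify the Armijo inequality at $\alpha=1$, and extra care is needed because $d^j$ is only an inexact Newton direction. Everything else is careful but routine bookkeeping of the residual bound $\|r^j\|\le\|\nabla\phi(X^j)\|^{1+\tau}$ through the error recursion; the uniform bounds $I\preceq\mathcal{V}_j\preceq(1+\sigma\lambda_{\max}(L_G))I$ make the constants explicit and keep $\|\mathcal{V}_j^{-1}\|\le 1$ throughout.
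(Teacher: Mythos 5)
Your three-stage plan---descent and well-definedness, global convergence from the Armijo decrease, then the local $(1+\tau)$-order rate from strong semismoothness plus the inexact-Newton error recursion---is exactly the skeleton the paper relies on; the difference is that the paper does not write it out, but establishes strong semismoothness of $\nabla\phi$ (via Lemma~\ref{lemma: strongly-semismooth} and the Moreau identity) and then delegates the three stages to \citet{zhao2010newton}: Proposition~3.3 there is the descent property, and Theorems~3.4 and~3.5 are the global convergence and the local rate. Your local-rate bookkeeping is sound: with a unit step, $X^{j+1}-\bar X=\mathcal{V}_j^{-1}\bigl[\mathcal{V}_j(X^j-\bar X)-\nabla\phi(X^j)+r^j\bigr]$, and the three ingredients you invoke ($\|\mathcal{V}_j^{-1}\|\le 1$ from $\mathcal{V}_j\succeq I$, strong semismoothness of $\nabla\phi$ at $\bar X$, and $\|r^j\|\le\|\nabla\phi(X^j)\|^{1+\tau}=O(\|X^j-\bar X\|^{1+\tau})$ by Lipschitz continuity of $\nabla\phi$) do combine to give $\|X^{j+1}-\bar X\|=O(\|X^j-\bar X\|^{1+\tau})$.

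The genuine gap is in your very first step. The implication ``$\mathcal{V}_j\succeq I$ plus smallness of $r^j$ implies $\langle\nabla\phi(X^j),d^j\rangle\le -c\|d^j\|^2$'' is false for an arbitrary $d^j$ satisfying only the stopping rule $\|r^j\|\le\min(\bar\eta,\|\nabla\phi(X^j)\|^{1+\tau})$: that rule controls $\|r^j\|$ against $\|\nabla\phi(X^j)\|$, not against $\|d^j\|$, and $\mathcal{V}_j^{-1}$ can nearly annihilate the component of $d^j$ along the gradient. Concretely, in $\R^2$ with $\tau=1/2$ and $\bar\eta=0.99$, take $\nabla\phi(X^j)=(1,\,0.2)$, $\mathcal{V}_j=\mathrm{diag}(11,\,1)$, $r^j=(0,\,0.99)$: all your hypotheses hold ($\mathcal{V}_j\succeq I$ and $\|r^j\|=0.99\le\min(\bar\eta,\|\nabla\phi(X^j)\|^{1+\tau})$), yet $d^j=\mathcal{V}_j^{-1}(r^j-\nabla\phi(X^j))=(-1/11,\,0.79)$ gives $\langle\nabla\phi(X^j),d^j\rangle\approx 0.067>0$, an ascent direction, so the Armijo search and your whole global phase collapse. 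What rescues the algorithm---and what the paper's citation of \citet{zhao2010newton}[Proposition~3.3] actually supplies---is a property of CG itself rather than of the residual bound: when CG is initialized at $d=0$, the iterate $d^j$ lies in the Krylov subspace to which the CG residual is orthogonal, so $\langle r^j,d^j\rangle=0$ exactly and hence $\langle\nabla\phi(X^j),d^j\rangle=-\langle\mathcal{V}_j(d^j),d^j\rangle\le-\|d^j\|^2$ no matter when CG is stopped. Your proof must invoke this CG-specific orthogonality (or impose an explicit angle safeguard on $d^j$); the residual bound alone cannot deliver descent. A secondary omission: the eventual acceptance of the unit step, which you rightly flag as the crux of the transition, is nontrivial precisely because $\phi$ is only $C^1$ with a semismooth, nondifferentiable gradient, so no classical second-order Taylor expansion of $\phi$ along $d^j$ is available; that argument is the content of \citet{zhao2010newton}[Theorem~3.5] and remains unproved in your sketch.
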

\begin{proof}
	From Lemma \ref{lemma: strongly-semismooth}, we know that $\Prox_{t\|\cdot\|_2}$ is strongly semismooth for any $t > 0$, together with the Moreau identity $\Prox_{tp}(x) + t\Prox_{p^*/t}(x/t) = x$, we know that
	$$\nabla\phi(X) = X - A + \mathcal{B}^*(\Prox_{\sigma p^*}(\sigma\mathcal{B}(X) + \tilde{Z})),$$
	is strongly semismooth. By \citep{zhao2010newton} [Proposition 3.3], we know that $d^j$ obtained in {\sc Ssncg} is a descent direction, which guarantees that
	the Algorithm {\sc Ssncg} is well defined.
	From \citep{zhao2010newton} [Theorem 3.4, 3.5], we can get the desired convergence results.
\end{proof}

\subsection{ {Generating   an initial  point}}
In our implementation, we use the following inexact alternating direction method of multipliers ({\sc iadmm})
developed in \cite{chen2017efficient} to generate an initial point
to warm-start {\sc Ssnal}. Note that with the global convergence result stated in Theorem \ref{thm: convergence_alm}, the performance of {\sc Ssnal} does not sensitively depend on the initial points, but it is still helpful if we can choose a good one.
\begin{algorithm}[!h]
	\caption{{\sc iadmm} for $(P)$}
	\label{alg:iadmm}
	\begin{algorithmic}
		\STATE {\bfseries Initialization:} Choose $\sigma > 0$, $(X^0, U^0, Z^0) \in \R^{d \times n} \times \R^{d \times |\mathcal{E}|} \times \R^{d \times |\mathcal{E}|}$, and a summable nonnegative  sequence $\{\epsilon_k\}$. For $k = 0, 1, \dots,$
		\REPEAT
		\STATE {\bfseries Step 1}. Let $R^k = A+\sigma\mathcal{B}^*(U^k-\sigma^{-1}Z^k)$.
		Compute
		\begin{eqnarray*}
			X^{k+1} &\approx& \arg\min_{X} \{\mathcal{L}_{\sigma}(X, U^k; Z^k)\},
			\\[3pt]
			U^{k+1} &=& \arg\min_{U}\{\mathcal{L}_{\sigma}(X^{k+1}, U; Z^k)\},
		\end{eqnarray*}
		where $X^{k+1}$ is an inexact solution satisfying the accuracy requirement that
		$\norm{(I_n+\sigma\mathcal{B}^*\mathcal{B}) X^{k+1} -R^k} \leq \epsilon_k.$
		\STATE {\bfseries Step 2}. Compute
		$$Z^{k+1} = Z^k + \tau\sigma_k(\mathcal{B}(X^{k+1}) - U^{k+1}),$$
		where  $\tau \in (0, \frac{1 + \sqrt{5}}{2})$ is typically chosen to be 1.618.
		\UNTIL{the stopping criterion is satisfied.}
	\end{algorithmic}
\end{algorithm}

Observe that in Step 1, $X^{k+1}$ is a computed solution for the following large linear system
of equations:
$$
	(I_n+\sigma\mathcal{B}^*\mathcal{B})X = R^k \quad \Longleftrightarrow\quad (I_n+\sigma L_G) X^T = (R^k)^T.
$$
To compute $X^{k+1}$, we can adopt a direct approach if the sparse Cholesky factorization of
$I_n+\sigma L_G$ (which only needs to be done once) can be computed at a moderate cost; otherwise
we can adopt an iterative approach by applying the conjugate gradient method to solve the above
fairly well-conditioned linear system.

\section{Numerical Experiments}

In this section, we will first demonstrate that the sufficient conditions we derived for perfect recovery
in Theorem \ref{Theorem-weighted-SON} is practical via a simulated example. Then, we will show the superior performance of our proposed algorithm {\sc Ssnal} on both simulated and real datasets, comparing to the popular algorithms such as  ADMM and AMA which are proposed in \citep{Chi15}. In particular, we will focus on the efficiency, scalability, and robustness of our algorithm for different values of $\gamma$. Also, we will show the performance of our algorithm on large datasets and unbalanced data. Previous numerical demonstration on the scalability and performance of (\ref{Eq: Modified_Model}) on large datasets is limited.
 The problem sizes of the instances tested in  \citep{Chi15} and other related papers are at most
several hundreds ($n \leq 500$ in \citep{Chi15}, $n \leq 600$ in \citep{pmlr-v70-panahi17a}), which are not
large enough to conclusively demonstrate the
scalability of the algorithms.
In this paper, we will present numerical  results for $n$ up to $\mathbf{200,000}$. We will also analyze the sensitivity of the computational efficiency of {\sc Ssnal} and {\sc AMA}, with respect to different choices of the parameters in (\ref{Eq: Modified_Model}), such as $k$ (the number of nearest neighbors) and $\gamma$.

We focus on solving (\ref{Eq: Modified_Model}) with $p = 2$ since the rotational invariance
of the $2$-norm makes it a robust choice in practice. Also, this case is more challenging than $p = 1$ or $p = \infty$.\footnote{Our algorithm can be generalized to solve (\ref{Eq: Modified_Model}) with $p=1$ and $p = \infty$ without much
difficulty.}
As the results reported in \citep{Chi15} have been regarded as the benchmark for the convex clustering model (\ref{Eq: Modified_Model}), we will compare our algorithm with the open source software {\sc cvxclustr}\footnote{https://cran.r-project.org/web/packages/cvxclustr/index.html} in \citep{Chi15}, which is an R package with key functions written in C. We write our code in {\sc Matlab} without any dedicated C functions. All our computational results are obtained from a desktop having 16 cores with 32 Intel Xeon E5-2650 processors at 2.6 GHz and 64 GB memory.

In our implementation, we stop our algorithm based on the following relative KKT residual:
$$\max\{\eta_P, \eta_D, \eta\} \leq \epsilon,$$
where\\
$$\eta_P = \frac{\|\mathcal{B}X - U\|}{1 + \|U\|}, \; \eta_D = \frac{\sum_{(i , j) \in \mathcal{E}}
	\max\{0, \|Z^{l(i, j)}\|_2 - \gamma w_{ij}\}}{1 + \|A\|},$$
$$\eta = \frac{\|\mathcal{B}^*(Z) + X - A\| + \|U - {\rm Prox}_{p}(U+Z)\|}{1 + \|A\| + \|U\|},$$
and $\epsilon > 0$ is a given tolerance. In our experiments, we set $\epsilon = 10^{-6}$ unless specified otherwise.
Since the numerical results reported in \citep{Chi15} have demonstrated the superior performance of AMA over ADMM, we will mainly compare our proposed algorithm with AMA. We note that {\sc cvxclustr} does not use the relative KKT residual as
its stopping criterion but used the duality gap in AMA and $\max\{\eta_P, \eta_D\} \leq \epsilon$ in ADMM. To make a fair comparison, we first solve (\ref{Eq: Modified_Model}) using {\sc Ssnal} with a given tolerance $\epsilon$,
and denote the primal objective value obtained as $P_{\rm Ssnal}$. Then, we run AMA in {\sc cvxclustr} and stop it as
soon as the computed  primal objective function value ($P_{\rm AMA}$) is close enough to $P_{\rm Ssnal}$, i.e.,
\begin{equation}\label{eq:stop_ama}
P_{\rm AMA} - P_{\rm Ssnal} \leq 10^{-6} P_{\rm Ssnal}.
\end{equation}
We note that since (\ref{Eq: Modified_Model}) is an unconstrained problem, the quality of the computed solutions can directly be compared based on the objective function values. We also stop  AMA if the maximum of $10^5$ iterations is reached.

When we generate the clustering path for the first parameter value of $\gamma$, we first run the {\sc Iadmm} introduced in Algorithm 3 for $100$ iterations to generate an initial point, then we use {\sc Ssnal} to solve (\ref{Eq: Modified_Model}). After that, we use the previously computed optimal solution for the lastest $\gamma$ as the initial point to warm-start
{\sc Ssnal} for solving the problem corresponding to the next $\gamma$. The same strategy is used in {\sc cvxclustr}.

\subsection{Numerical Verification of Theorem \ref{Theorem-weighted-SON}}

In this section, we demonstrate that the theoretical results we obtained in Theorem \ref{Theorem-weighted-SON} are practically meaningful by conducting numerical experiments on a simulated dataset with five clusters. We generate the five clusters randomly via a 2D Gaussian kernel. Each of the cluster has 100 data points, as shown in Figure \ref{fig-1}.

\begin{figure}[!h]
	\begin{center}
		\includegraphics[width=0.45\textwidth]{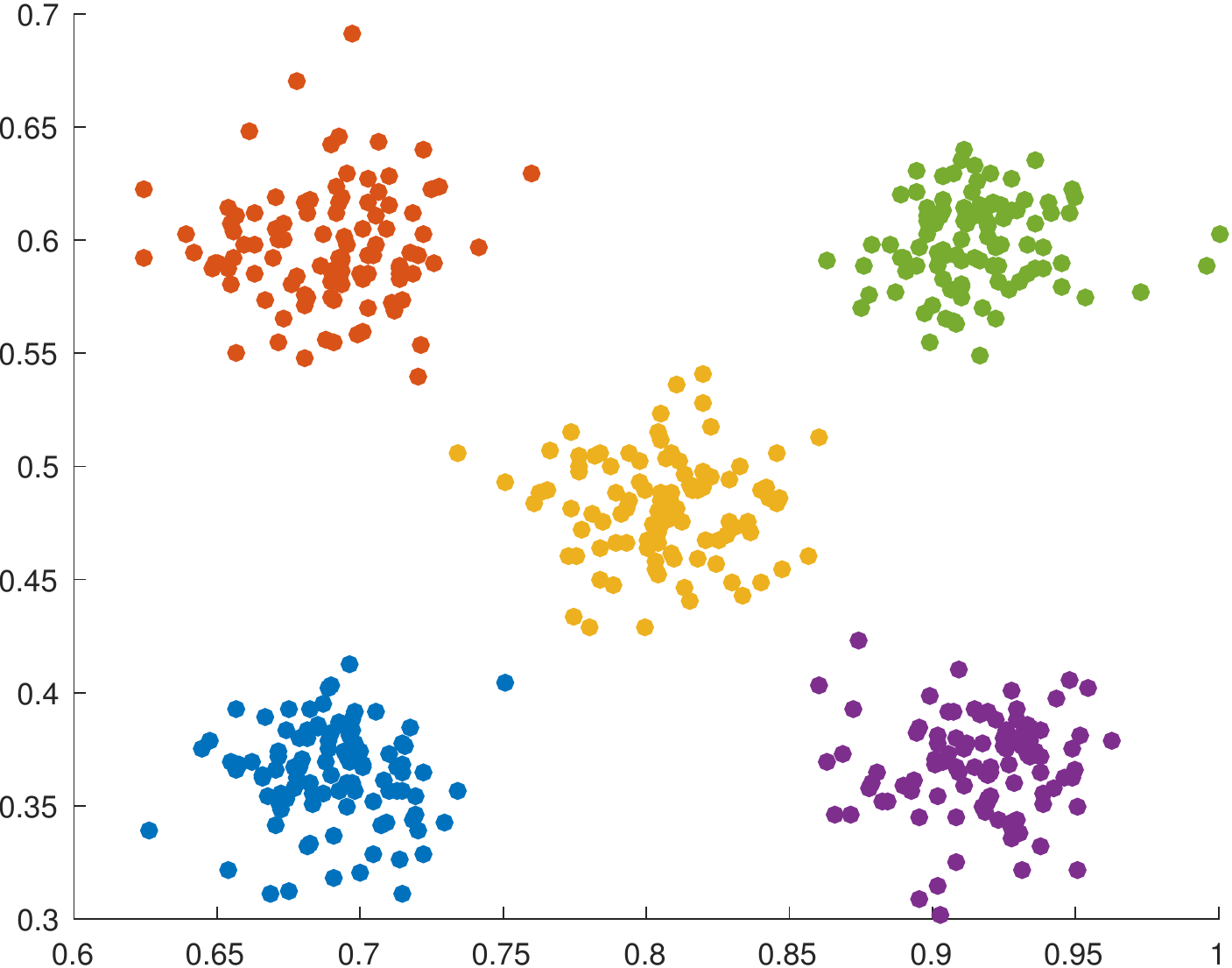}
	\end{center}
	\vskip -0.2in
		\caption{Visualization of the generated data.}
	\label{fig-1}
\end{figure}

Since we know the cluster assignment for each data point, we can construct the corresponding centroid problem
given in (\ref{eq-centroid-lp}). Then, we can solve the weighted convex clustering model (\ref{Eq: Modified_Model}) and the corresponding centroid problem (\ref{eq-centroid-lp}) separately to compare the results.
In our experiments, we choose the weight $w_{ij}$ as follows
$$
w_{ij} = \left\{
\begin{array}{ll}
\exp(-0.5\|\mathbf{a}_i - \mathbf{a}_j\|^2)
& \mbox{if  $(i, j)\in \mathcal{E}$},\\[5pt]
0 & \mbox{otherwise.}
\end{array}
\right.
$$
where $\mathcal{E} = \cup_{i=1}^n \{(i, j) \mid  \mbox{$j$ is among $i$'s 30-nearest}$ $\mbox{neighbors}, i< j\leq n\} \cup_{\alpha = 1}^5 \{(i, j) \mid i, j \in I_{\alpha}, i < j\}$.

First, we solve (\ref{Eq: Modified_Model}) and (\ref{eq-centroid-lp}) separately to find their optimal solutions, denoted as $X^* = [\bx_1^*, \bx_2^*, \dots, \bx_n^*]$ and $\bar{X} = [\bar{\bx}^{(1)}, \bar{\bx}^{(2)}, \dots, \bar{\bx}^{(K)}]$, respectively. Then, we can construct the new solution $\hat{X}$ for (\ref{Eq: Modified_Model}) based on $\bar{X}$ as
$$
\hat{\bx}_i = \bar{\bx}^{(\alp)} \; \; \forall\; i \in I_{\alp}, \quad \alp = 1,\ldots,5.
$$
We also compute the theoretical lower bound $\gamma_{\min}$ and upper bound $\gamma_{\max}$ based on the formula given in
Theorem \ref{Theorem-weighted-SON}, and they are given by
$$\gamma_{\min} = 1.56 \times 10^{-3}, \quad \gamma_{\max} = 0.485.$$

Based on the computed results shown  in the left panel of Figure \ref{fig-gamma}, we can observe the phenomenon that for very small $\gamma $, $X^*$ and $\hat{X}$ are different. However, when $\gamma$ becomes larger, $X^*$ and $\hat{X}$ coincide with each other in that $\norm{X^*-\hat{X}}$ is almost 0 (up to the accuracy level we solve the problems
(\ref{Eq: Modified_Model}) and (\ref{eq-centroid-lp})).
In fact, we see that for $\gamma$ larger than the theoretical lower bound $\gamma_{\min}$ but less than $\gam_{\max}$, we
have perfect recovery of the clusters by solving (\ref{Eq: Modified_Model}), and when $\gam$ is slightly
smaller than $\gamma_{\min}$, we lose the perfect recovery property.

Furthermore, from our results in Theorem \ref{Theorem-weighted-SON}, we know that when $\gamma$ is
smaller than $\gam_{\max}$ but larger than $\gam_{\min}$, we should recover the correct number of clusters. This is indeed observed
in the result shown in the right panel of Figure \ref{fig-gamma}
where we track the number of clusters for different values of $\gamma$.
Moreover, when $\gamma$ is about two times
larger than $\gamma_{\max}$, we get a coarsening of the clusters.
The results shown above demonstrate that the theoretical results we have established
in Theorem \ref{Theorem-weighted-SON} are meaningful in practice.

\begin{figure}[!h]
	\begin{center}
		\includegraphics[width=0.45\textwidth]{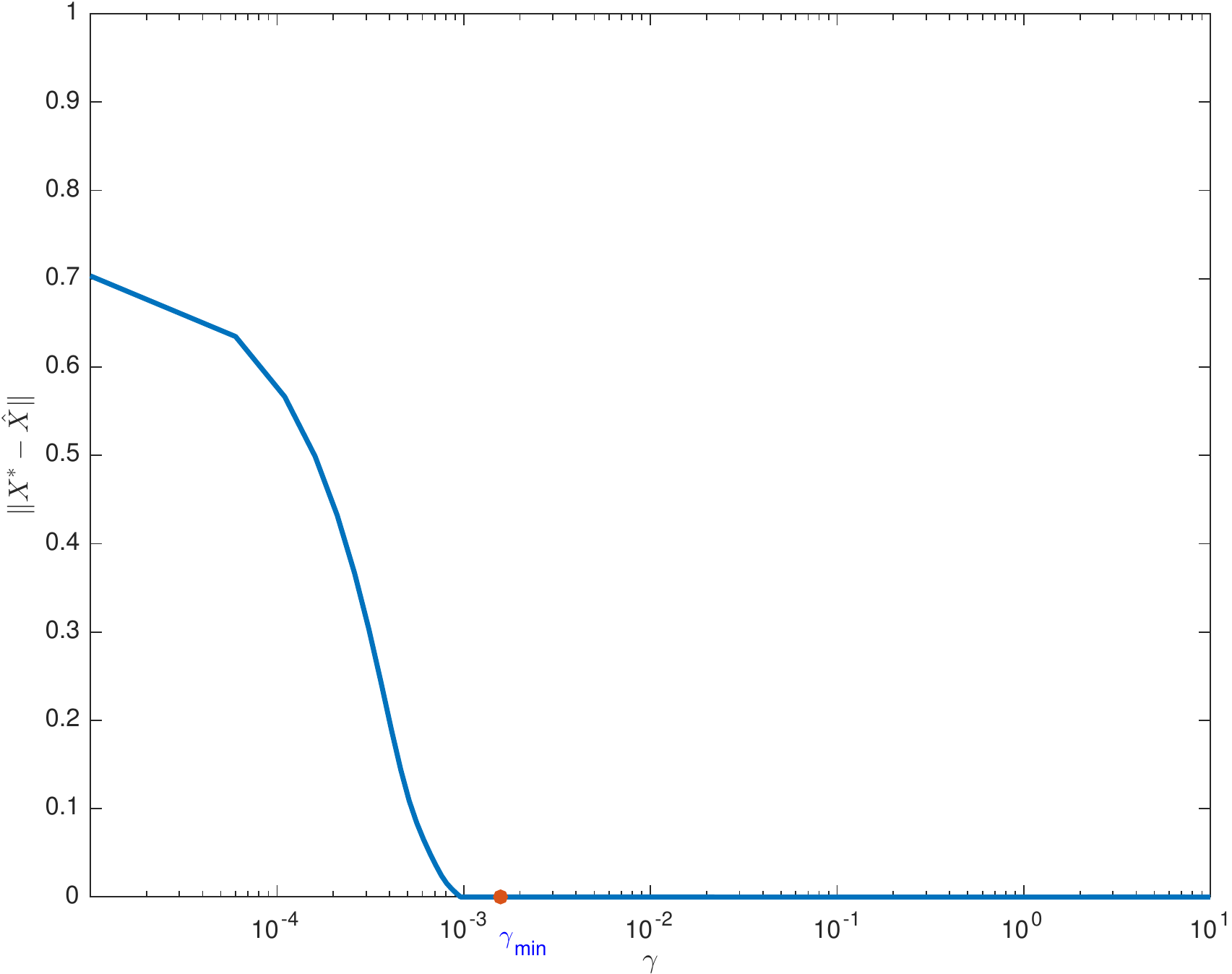}
		\qquad
		\includegraphics[width=0.45\textwidth]{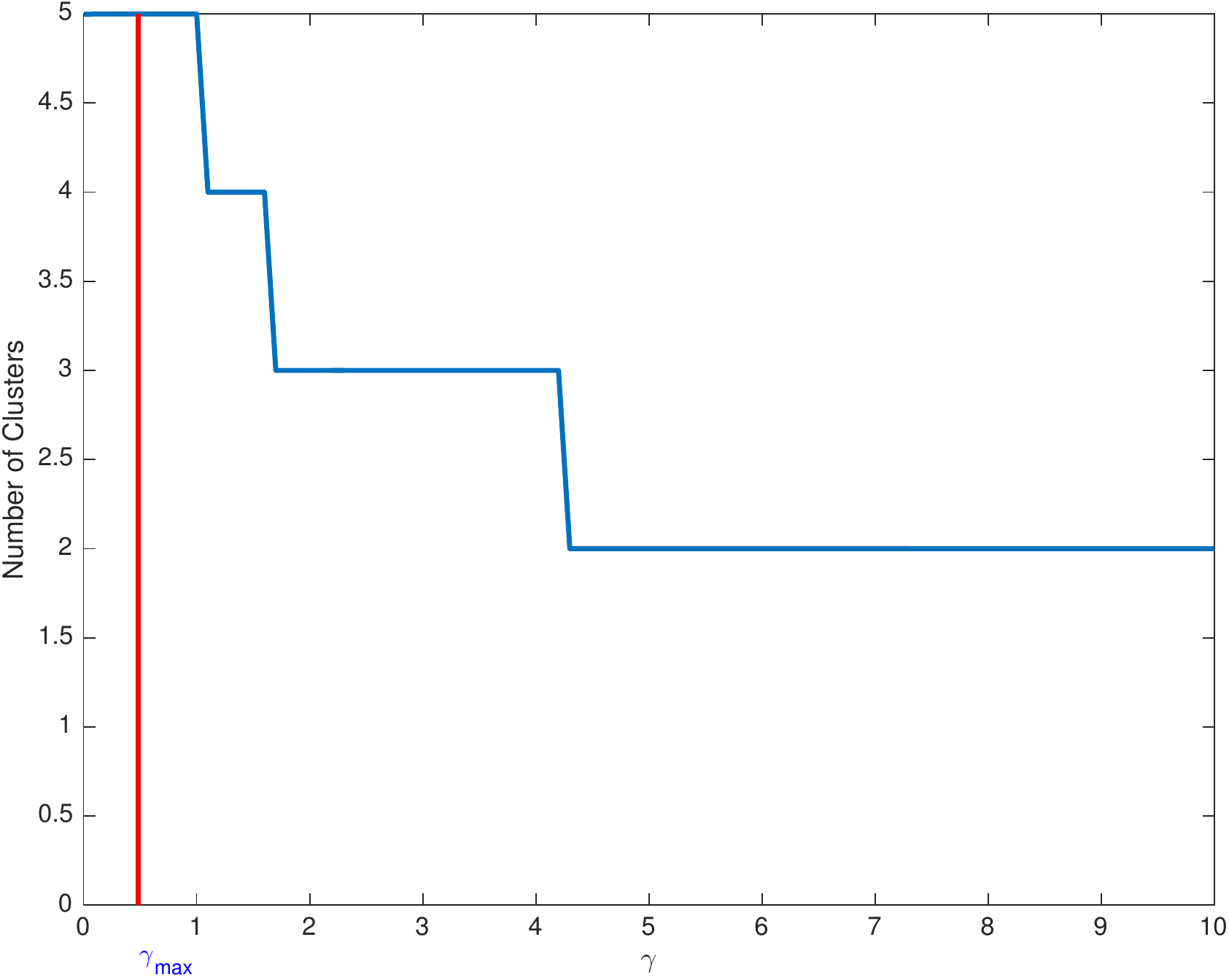}
	\end{center}
	\vskip -0.2in
	\caption{Left panel: $\|X^{*} - \hat{X}\|$ vs $\gamma$; Right panel: number of clusters vs $\gamma$.}
\label{fig-gamma}
\vskip-0.2in
\end{figure}

Next, we show the numerical performance of our proposed optimization algorithm for solving
(\ref{Eq: Modified_Model})  via
 (\ref{Eq: convex-clustering-l2-reduced}).

\subsection{Simulated data}

In this section, we show the performance of our algorithm {\sc Ssnal} on three simulated datasets: Two Half-Moon, Unbalanced Gaussian \citep{UnbalanceSet} and semi-spherical shells data. We compare our {\sc Ssnal} with the {\sc AMA} in \citep{Chi15} on different problem scales. The numerical results in Table \ref{tab: cmp_time_halfmoon}  show the superior performance of {\sc Ssnal}.
We also visualize some selected recovery results for Two Half-moon and Unbalanced Gaussian in Figure
\ref{fig: visualization_sim}\,.

\begin{figure*}[!h]
	\vskip 0.2in
	\begin{center}
			\includegraphics[width=0.3\textwidth]{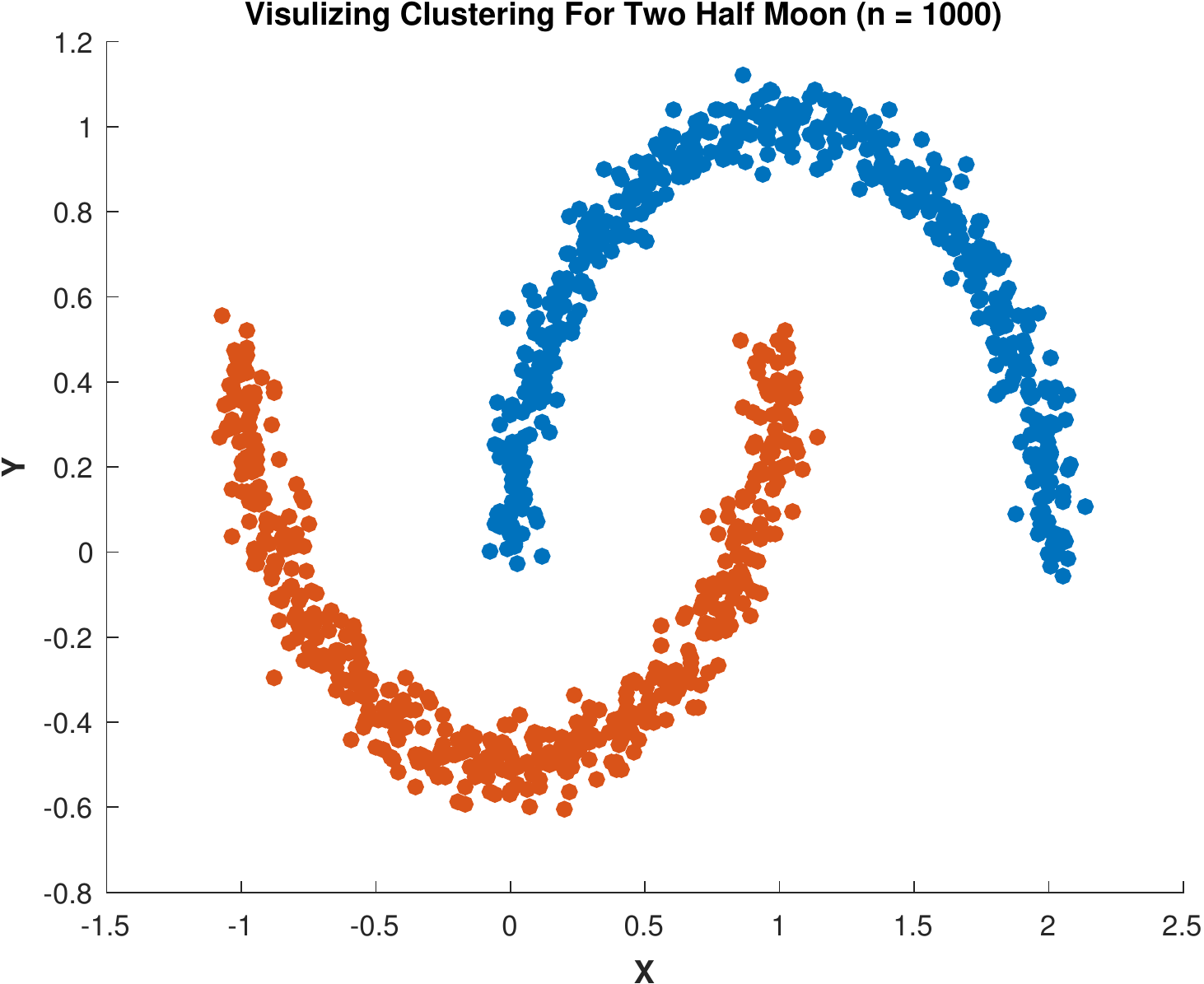}
			\includegraphics[width=0.3\textwidth]{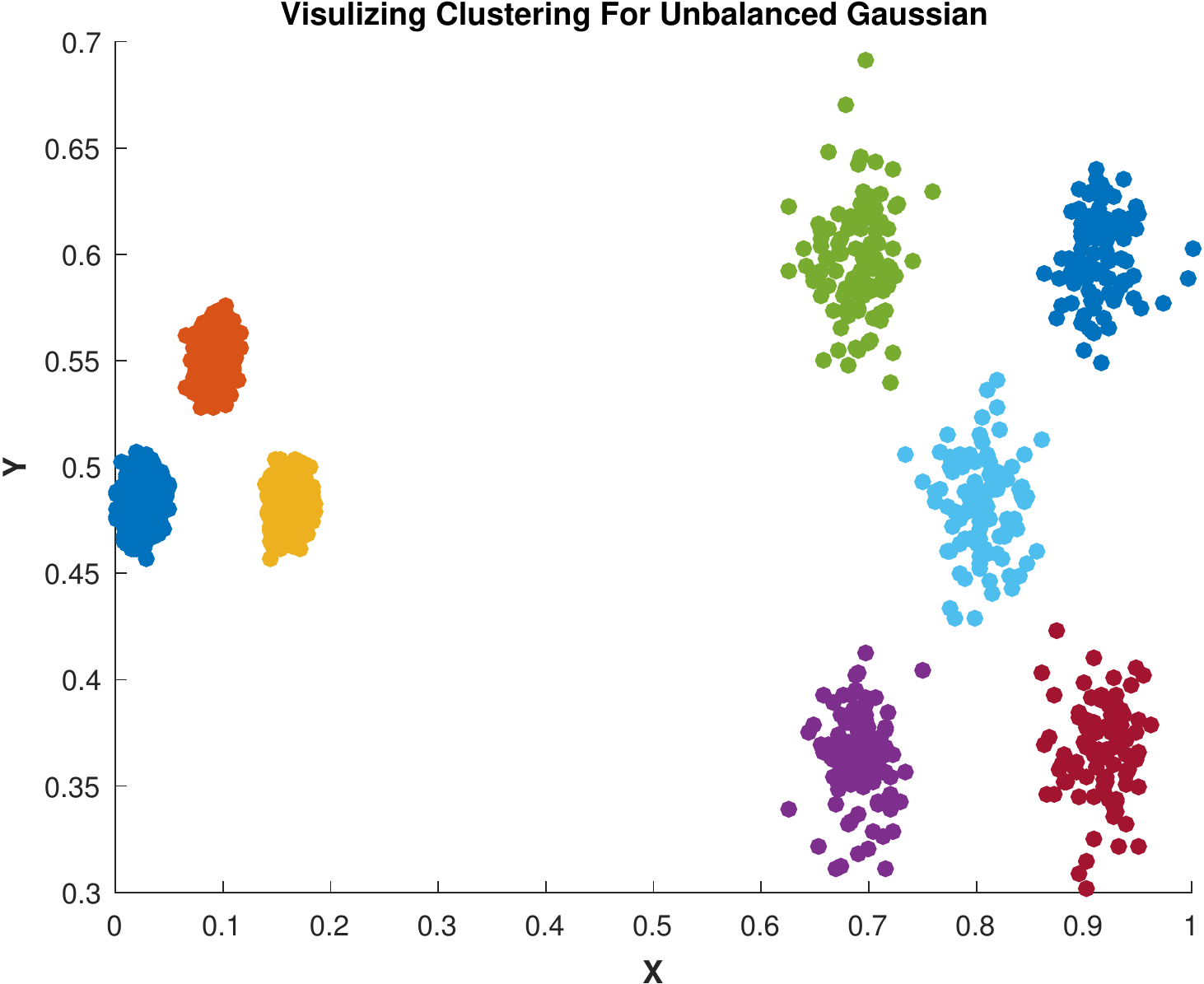}
			\includegraphics[width=0.3\textwidth]{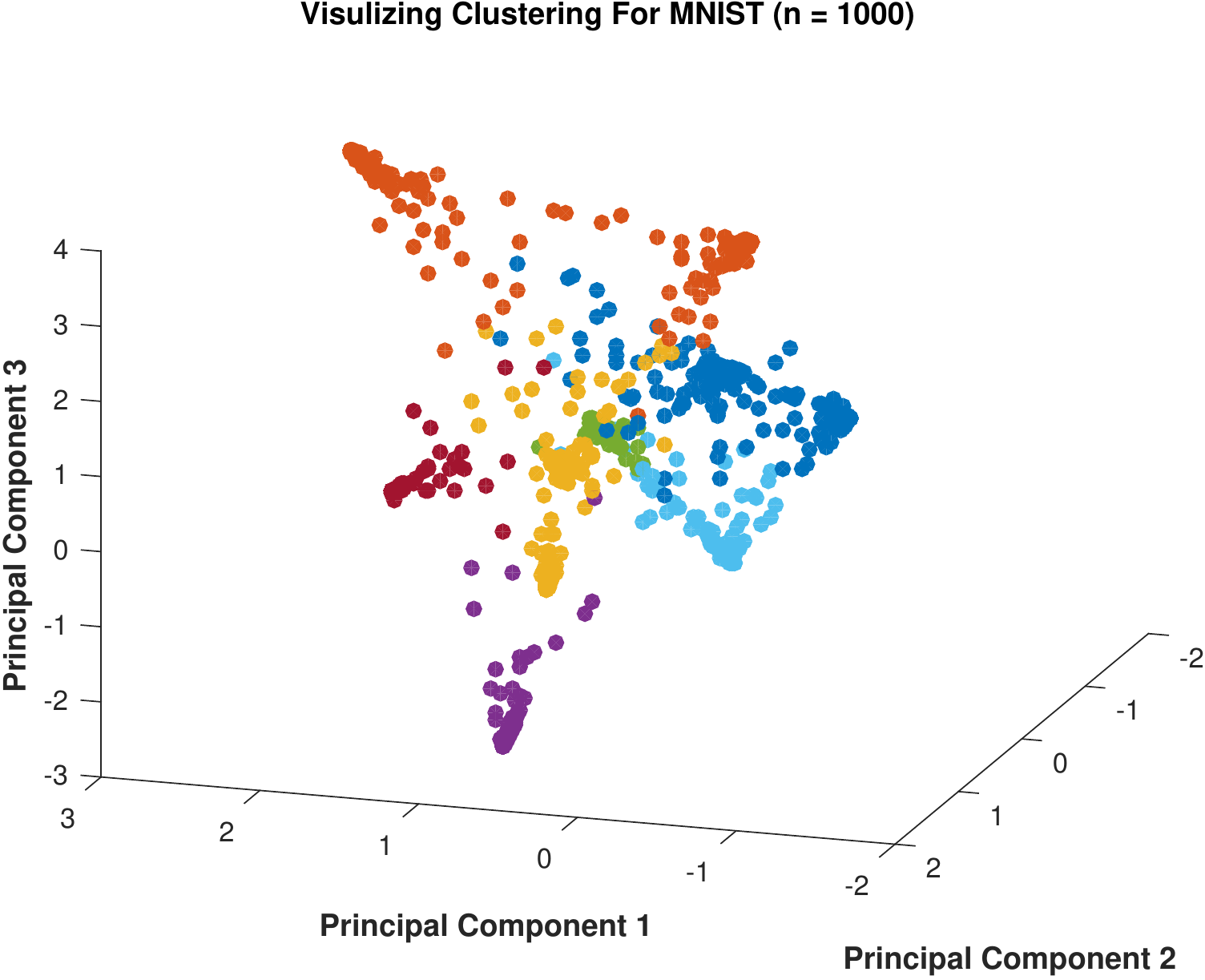}
		\caption{Selected recovery results by model (\ref{Eq: Modified_Model}) with $2$-norm. Left: Two Half-Moon data with $n = 1000$, $k=20$,  $\gamma = 5$. Middle: Unbalanced Gaussian data with $n = 6500$, $k=10$,  $\gamma = 1$. Right: a subset of MNIST with $n = 1000$,
			$\gamma = 1$.} \label{fig: visualization_sim}
	\end{center}
\end{figure*}

\subsubsection*{Two Half-Moon data}

The simulated data of two interlocking half-moons in $\R^2$ is one of the most popular
test examples in clustering.
Here we compare the computational time between our proposed {\sc Ssnal} and {\sc AMA} on this dataset with different problem scales. We note that AMA could not satisfy the stopping criteria (\ref{eq:stop_ama}) within $100000$ iterations when $n$ is large. 
In the experiments, we choose $k = 10$, $\phi = 0.5$ (for the weights $w_{ij}$)
and $\gamma \in [0.2:0.2:10]$ (in {\sc Matlab} notation) to generate the clustering path. After generating the clustering path with {\sc Ssnal}, we repeat the experiments using the same pre-stored
primal objective values and stop the AMA using
the criterion (\ref{eq:stop_ama}).
We report the average time for solving each problem (50 in total) in Table \ref{tab: cmp_time_halfmoon}.
Observe that our {\sc Ssnal} can be more than 50 times faster than AMA.

We also compare the recovery performance between the convex clustering model (\ref{Eq: Modified_Model}) and K-means (\ref{eq: K-means}). We choose the Rand Index \citep{hubert1985comparing} as the metric to evaluate the performance of these two clustering algorithms. In Figure \ref{fig:randindex-halfmoon}, we can see
that comparing to the K-means model, the convex clustering model is able to achieve  a much
better Rand Index, even when the number of clusters is not correctly identified.
\begin{table}[!h]
	\caption{Computation time (in seconds) comparison on the Two Half-Moon data. (--- means that the maximum number of 100,000 iterations is reached)}
	\label{tab: cmp_time_halfmoon}
	\begin{center}
	\begin{small}
		\vskip-5mm
			\begin{tabular}{ccccccc}
				\toprule
				$n$ & 200 & 500 &  1000 & 2000 & 5000& 10000\\
				\midrule
				AMA & 0.41 & 4.43 & 28.27 & 78.36 &---& ---\\
				{\sc Ssnal} & $\mathbf{0.11}$ & $\mathbf{0.19}$ &$\mathbf{0.49}$ & $\mathbf{0.91}$& $\mathbf{3.82}$ &$\mathbf{9.15}$\\
				\bottomrule
			\end{tabular}
	\end{small}
	\end{center}
	\vskip -0.1in
\end{table}

\begin{figure}[!h]	
	\vskip 0.2in
	\begin{center}
		\includegraphics[width=0.4\textwidth]{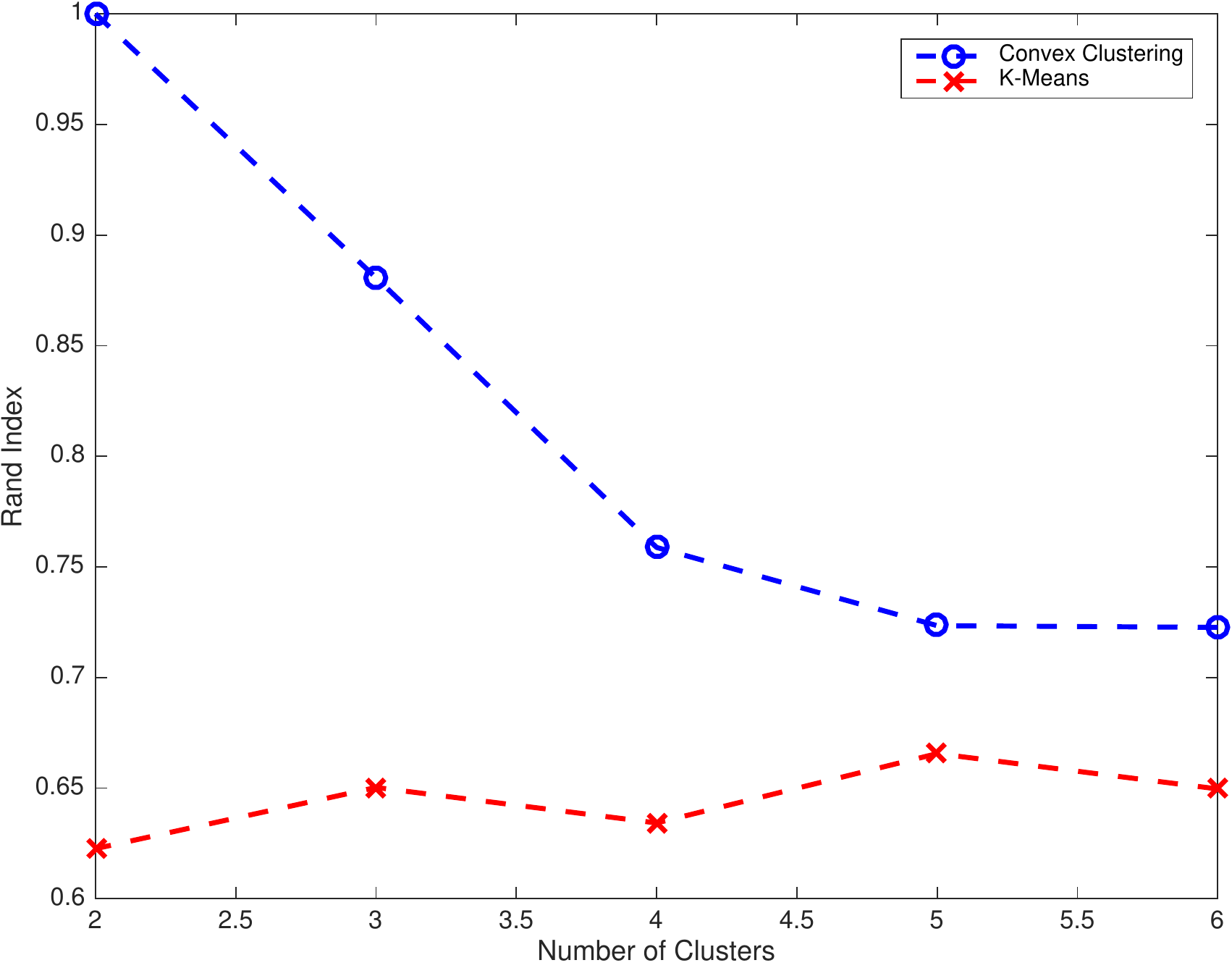}
		\quad
		\includegraphics[width=0.4\textwidth]{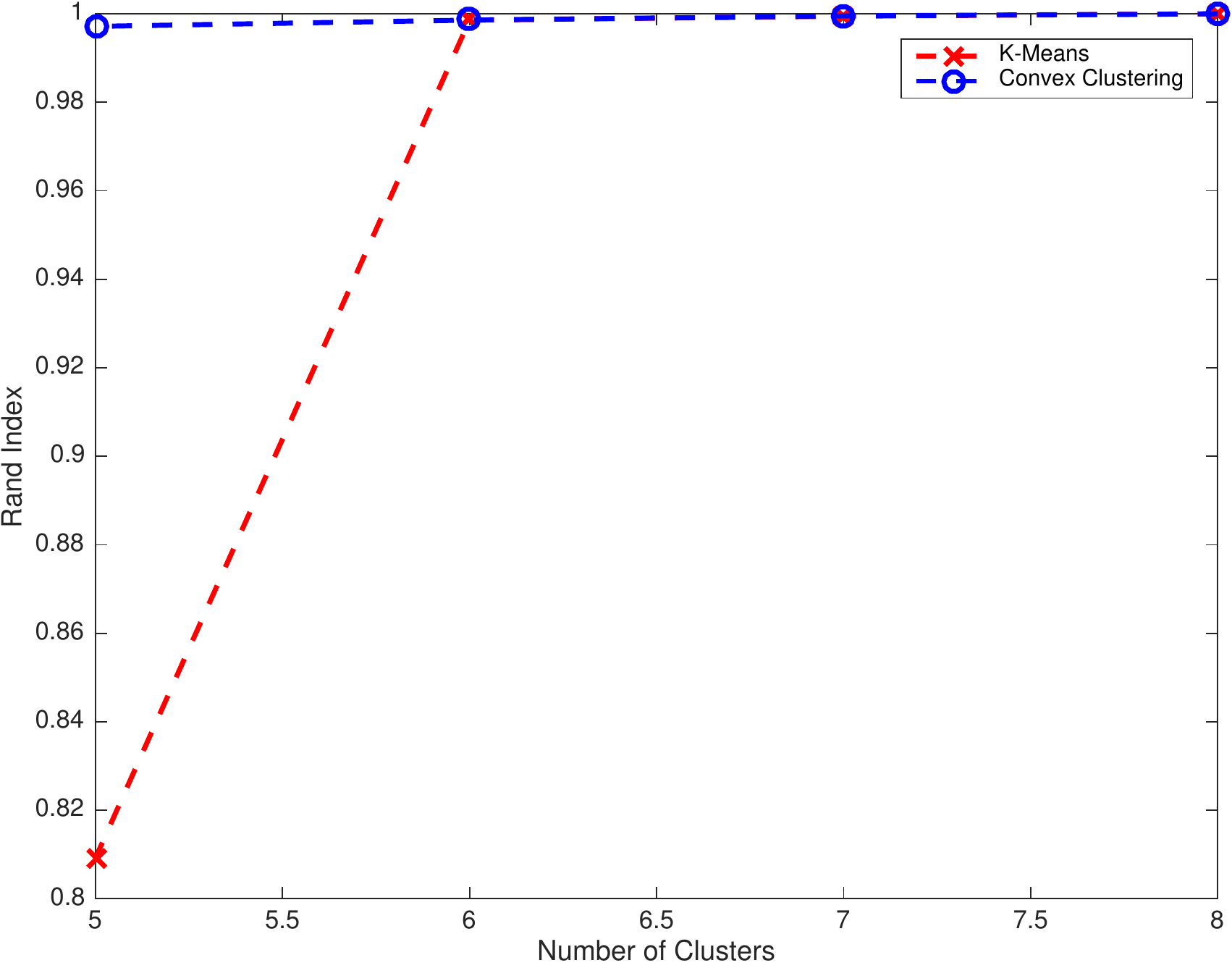}
	\end{center}
	\vskip -0.2in
	\caption{Clustering performance (in terms of the Rand Index) of the convex clustering and K-means models on the Two Half Moon dataset (left panel) and the Unbalanced Gaussian dataset (right panel).} \label{fig:randindex-halfmoon}
	\vskip -0.2in
\end{figure}

\subsubsection*{Unbalanced Gaussian and semi-spherical shells data}

Next, we show the performance of {\sc Ssnal} and AMA on the Unbalanced Gaussian data points in $\R^2$ \citep{UnbalanceSet}.
 In this experiment, we solve (\ref{Eq: Modified_Model}) with $k=10$, $\phi = 0.5$ and $\gamma \in [0.2:0.2:2]$.
For this dataset,
we have scaled it so that each entry is in the interval $[0, 1]$.
We can see from  Figure \ref{fig: visualization_sim}
 that the convex clustering model (\ref{Eq: Modified_Model}) can recover the cluster assignments perfectly with well chosen parameters.

In the experiments, we find that AMA has difficulties in reaching the stopping criterion (\ref{eq:stop_ama}).
We summarize some selected results in Table \ref{tab: result_unbalanced_gauss},
wherein we report the computation times and iteration counts for both AMA and {\sc Ssncg}.
Note that we report the number of {\sc Ssncg} iterations because each of these iterations constitute
the main cost for {\sc Ssnal}.
In
Figure \ref{fig:randindex-halfmoon},
we show the recovery performance between the convex clustering model and K-means on this dataset.

\begin{table}[!h]
	\caption{Numerical results on Unbalanced Gaussian data. }
	\label{tab: result_unbalanced_gauss}
	\vskip3mm	
	\begin{center}
	\begin{small}
		\begin{tabular}{llllll}
			\toprule
			$\gamma$ & 0.2 & 0.4 & 0.6 & 0.8 & 1.0\\
			\midrule
			$t_{\rm AMA}$& 264.54 &256.21 & 260.06 &262.16&263.27\\
			$t_{\mbox{\sc Ssnal}}$ & $\mathbf{1.15}$ & $\mathbf{0.57}$ &$\mathbf{0.65}$& $\mathbf{0.64}$& $\mathbf{0.83}$\\
			Iter$_{\rm AMA}$ & 100000 & 97560&97333&100000&100000\\
			Iter$_{\mbox{\sc Ssncg}}$&  $\mathbf{23}$ & $\mathbf{21}$ & $\mathbf{24}$ & $\mathbf{24}$ & $\mathbf{27}$\\
			\bottomrule
		\end{tabular}
	\end{small}
	\end{center}
	\vskip -0.1in
\end{table}


In order to test the performance of our {\sc Ssnal} on large data set, we also generate
a data set with $\mathbf{200,000}$ points in $\R^3$ such that 50\% of the points are uniformly distributed
in a semi-spherical shell whose inner and outer surfaces have radii equal to 1.0 and 1.4, respectively.
The other 50\% of the points are uniformly distributed in a concentric semi-spherical
shell whose  inner and outer surfaces have radii equal to 1.6 and 2.0, respectively.
Figure \ref{fig-spherical-shells} depicts the recovery result when we use only 6,000 points.
For the data set with $\mathbf{n=200,000}$, our algorithm takes only {\bf 374 seconds}  to solve
the model (\ref{Eq: Modified_Model}) when we choose $\gamma = 50$, $\phi=0.5$
and $k=10$.
In solving the problem, our algorithm used 32 {\sc Ssncg} iterations
and the average number of CG steps needed to solve the large linear
system (\ref{eq: newton-linearsystem}) is 79.3 only.
Thus, we can see that our algorithm can be very efficient in solving the
convex clustering model  (\ref{Eq: Modified_Model}) even when the data set is large.
Note that we did not run AMA as it will take too much time to solve the problem.

\begin{figure}[!h]	
	\begin{center}
		\includegraphics[width=0.9\textwidth]{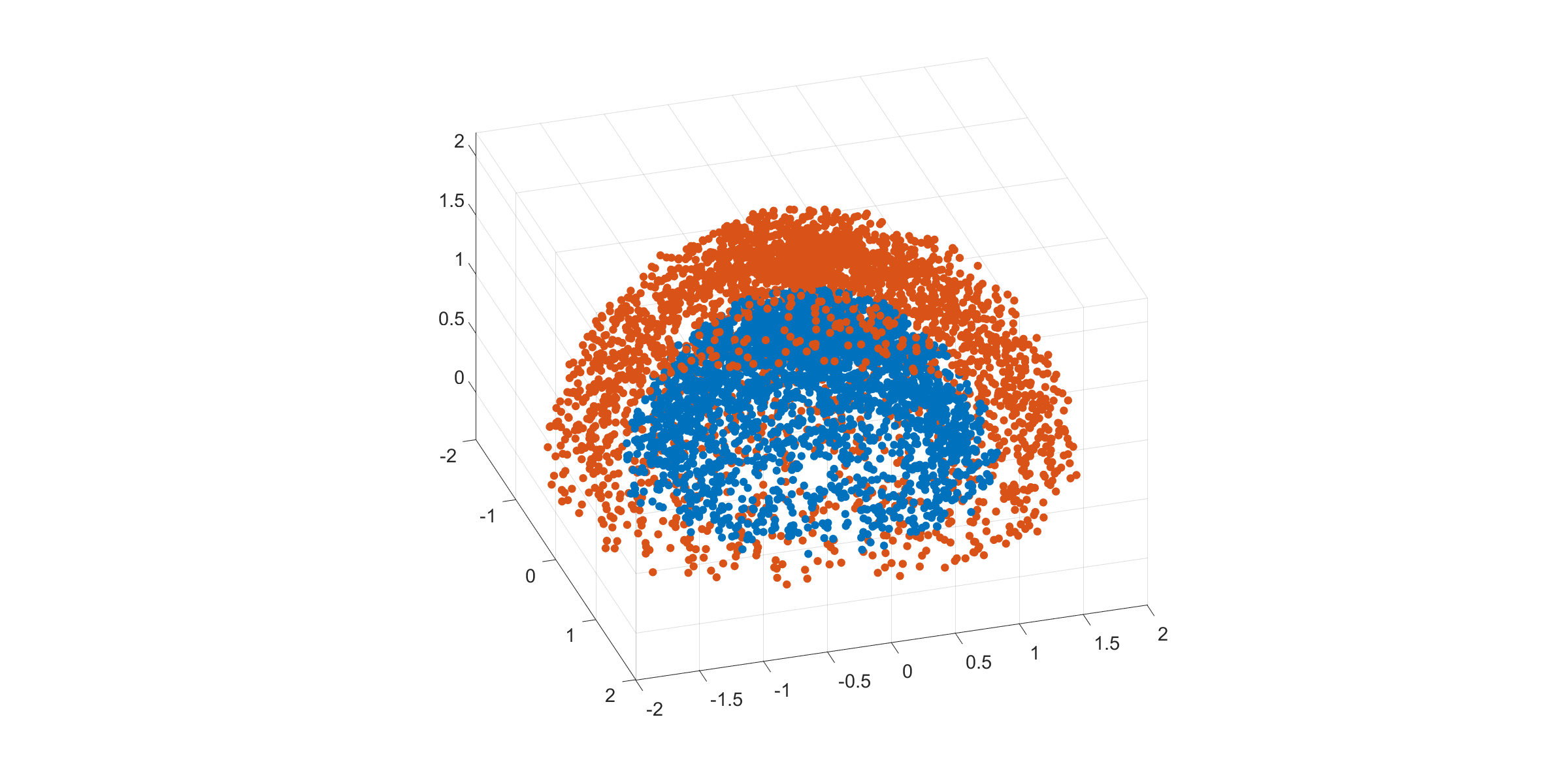}
	\end{center}
	\vskip -0.3in
	\caption{Recovery result by model  (\ref{Eq: Modified_Model}) for a
	semi-spherical shells data set with $6,000$ points.} \label{fig-spherical-shells}
	\vskip -0.2in
\end{figure}

\subsection{Real data}

In this section, we compare the performance of our proposed {\sc Ssnal} with AMA on some real datasets, namely, MNIST, Fisher Iris, WINE, Yale Face B(10Train subset).
For real datasets, a preprocessing step is sometimes necessary to transform the data to one
whose features are meaningful for clustering.
Thus, for a subset of MNIST (we selected a subset because AMA cannot handle the whole dataset), we first apply the preprocessing method described in \citep{mixon2016clustering}. Then we apply the
model (\ref{Eq: Modified_Model}) on the preprocessed data.
The comparison results between {\sc Ssnal} and {\sc AMA} on the real datasets are presented in Table
\ref{tab: cmp_time_real}. One  can observe that {\sc Ssnal} can be much more efficient than
{\sc AMA}.

\begin{table}[!h]
	\caption{Computation time comparison on real data. (*) means that the maximum of 100000 iterations is reached for
		all instances.}
	\label{tab: cmp_time_real}
	\begin{center}
	\begin{small}
	\vskip-5mm
			\begin{tabular}{lllll}
				\toprule
				Dataset & $d$ & $n$ &  AMA(s)& {\sc Ssnal}(s)\\
				\midrule
				MNIST & 10 & 1,000 & 79.48 & $\mathbf{1.47}$\\
				MNIST & 10 & 10,000 & 1753.8$^*$ & $\mathbf{69.3}$ \\
				Fisher Iris & 4 & 150 &0.58 & $\mathbf{0.16}$\\
				WINE & 13 & 178 &2.62 & $\mathbf{0.19}$\\
				Yale Face B& 1024 & 760 & 211.36 & $\mathbf{35.13}$\\
				\bottomrule
			\end{tabular}		
	\end{small}
	\end{center}
	\vskip -0.1in
\end{table}

\subsection{Sensitivity with different $\gamma$}
In order to generate a clustering path for a given dataset, we need to solve (\ref{Eq: Modified_Model}) for
a sequence of $\gamma > 0$. So the stability of the performance of the optimization algorithm with different $\gamma$ is very important. In our experiments, we have found that the performance of AMA is rather sensitive to the
value of $\gamma$ in that the time taken to solve problems with different values of
$\gamma$ can vary widely. However, {\sc Ssnal} is much more stable.
In  Figure 6,
we show the comparison between {\sc Ssnal} and AMA on both the
Two Half-Moon and MNIST datasets with $\gamma \in [0.2:0.2:10]$.
\begin{figure}[!h]
	\vskip 0.2in
	\begin{center}
			\includegraphics[width=0.4\textwidth]{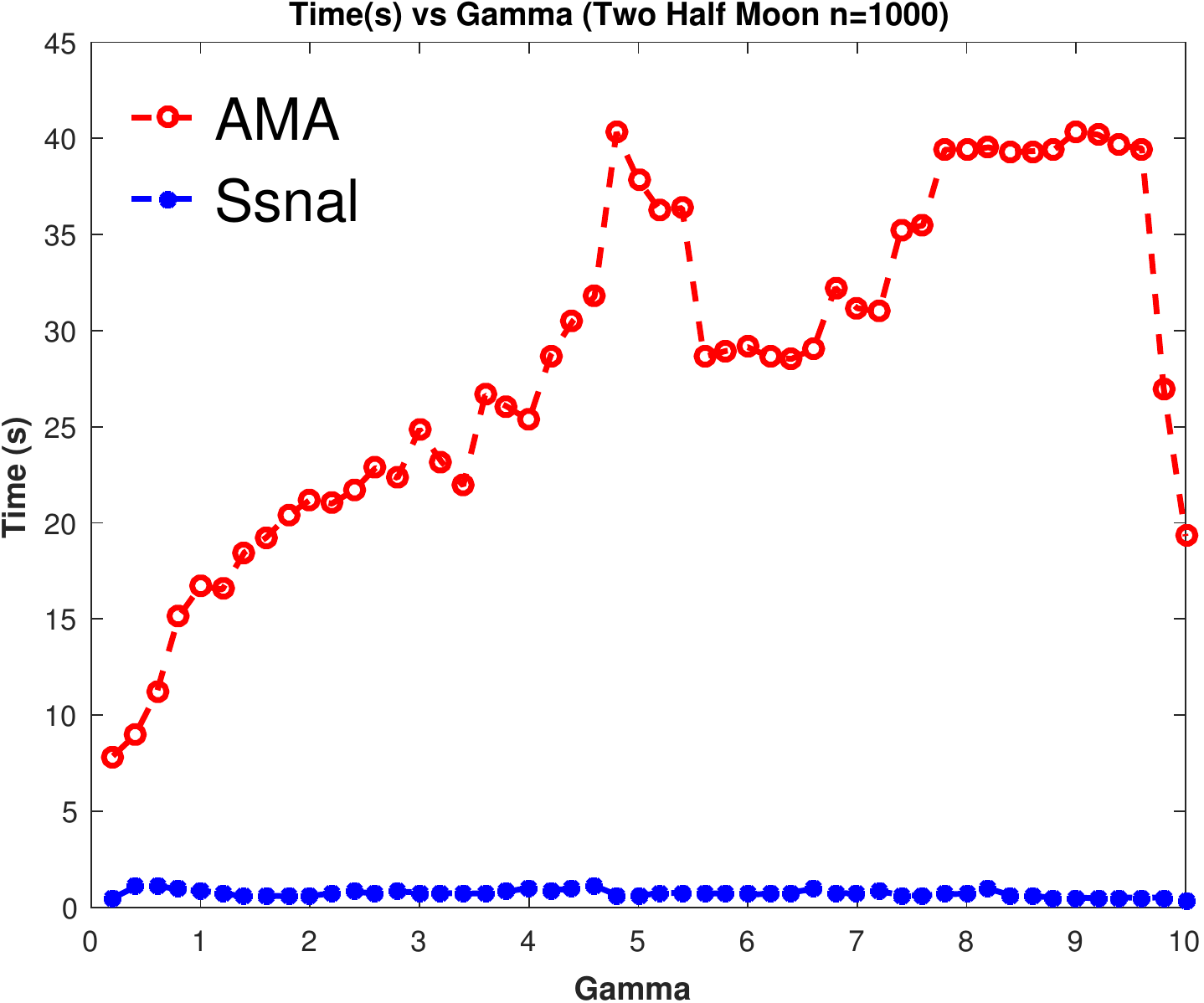}
			\includegraphics[width=0.4\textwidth]{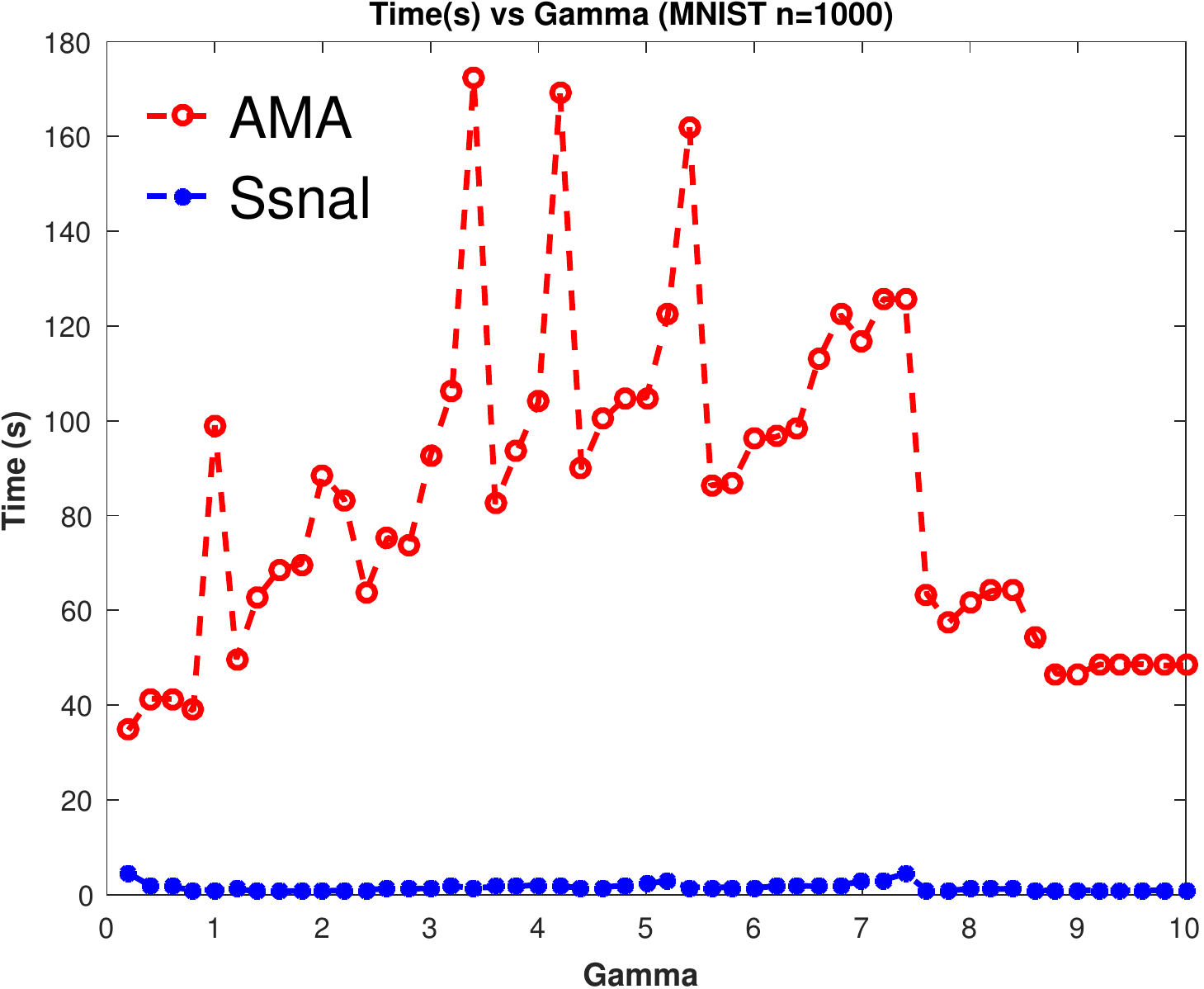}
		\caption{Time comparison between {\sc Ssnal} and {\sc AMA} on both Two Half-Moon and MNIST data with $\gamma \in [0.2:0.2:10]$.}
	\end{center}
	\vskip -0.2in
\end{figure}

\subsection{Scalability of our proposed algorithm}
In this section, we demonstrate the scalability of our algorithm {\sc Ssnal}. Before we show the numerical results, we give some insights as to why our algorithm could be scalable. Recall that
the most computationally expensive step in our framework is in using the semismooth Newton-CG method to solve (\ref{eq: nonsmooth_eq}). However, if we look inside the algorithm, we can see that the key step is to use the CG method to solve (\ref{eq: newton-linearsystem}) efficiently to get the Newton direction. According to our complexity analysis in Section \ref{sec: CG}, the computational cost for one step of  the CG method is $O(d|{\mathcal{E}}|+d|\widehat{\mathcal{E}}|)$.
By the specific choice of $\mathcal{E}$,
$|\cal E|$ and $|\widehat{\mathcal{E}}|$ should only grow
slowly with  $n$.
The low computational cost for the matrix-vector product in our CG method, the rapid
convergence of the CG method, and the
fast convergence of the {\sc Ssncg} are
the key reasons behind why our algorithm can be scalable and efficient.

In our experiments, we set $\phi = 0.5$, $k = 10$ (the number of nearest neighbors). Then we solve (\ref{Eq: Modified_Model}) with $\gamma \in [0.4:0.4:20]$. After generating the clustering path, we compute the average time for solving a single instance of (\ref{Eq: Modified_Model}) for each problem scale.
Another factor related to the scalability is the number of neighbors $k$ used
to generate $\mathcal{E}$ in (\ref{Eq: Modified_Model}). So, we also show the performance of {\sc Ssnal} with different values of $k$. For each $k \in [5:5:50]$, we generate the clustering path for the Two Half-Moon data with $n = 2000$. Then we report the average time for solving
a single instance of (\ref{Eq: Modified_Model})  for each $k$.
We summarize our numerical results in Figure 7.
We can observe that the computation time grows almost linearly with $n$ and $k$.
\begin{figure}[!h]\label{fig: scale}
	\vskip 0.2in
	\begin{center}
			\includegraphics[width=0.40\textwidth]{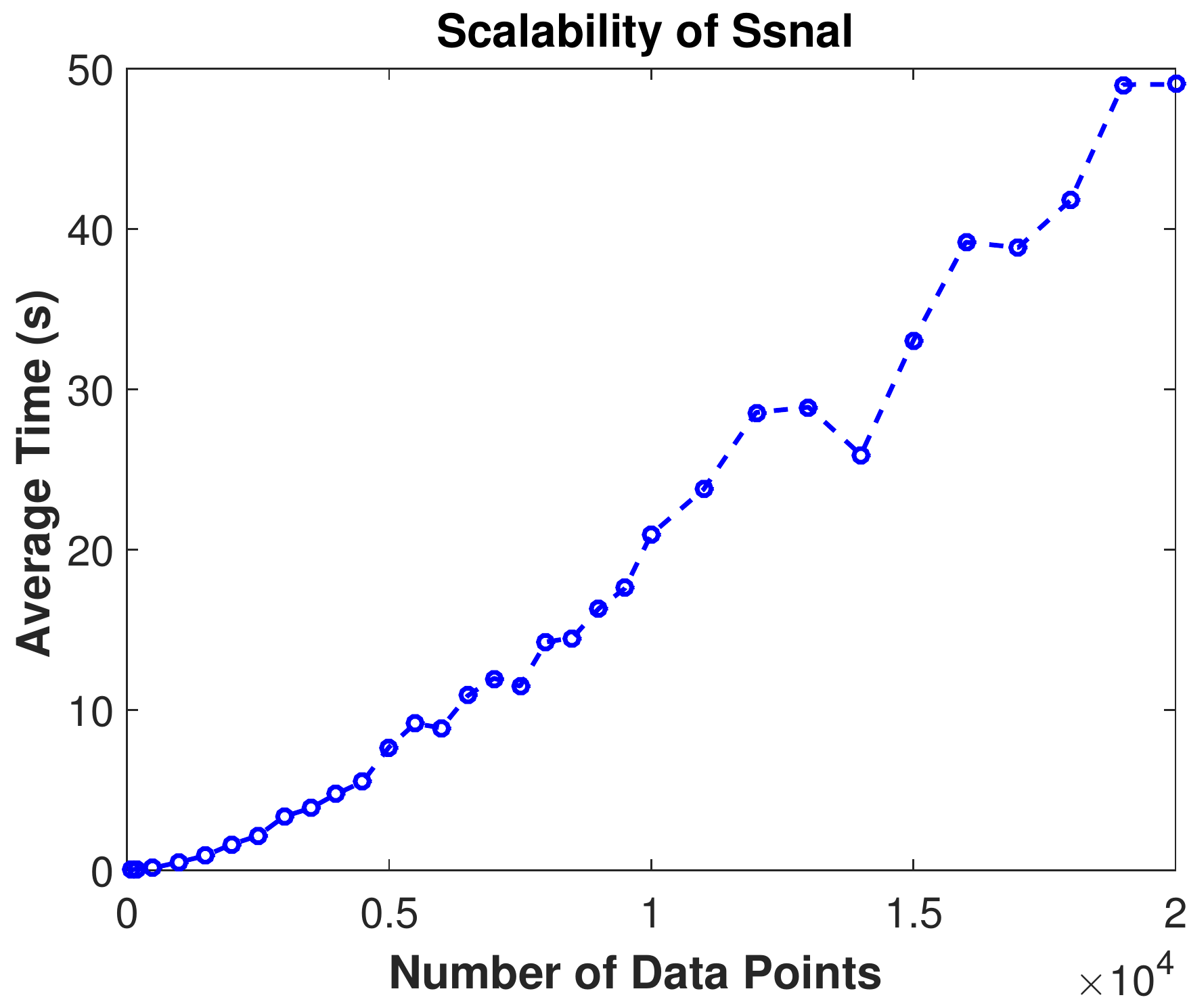}
			\includegraphics[width=0.40\textwidth]{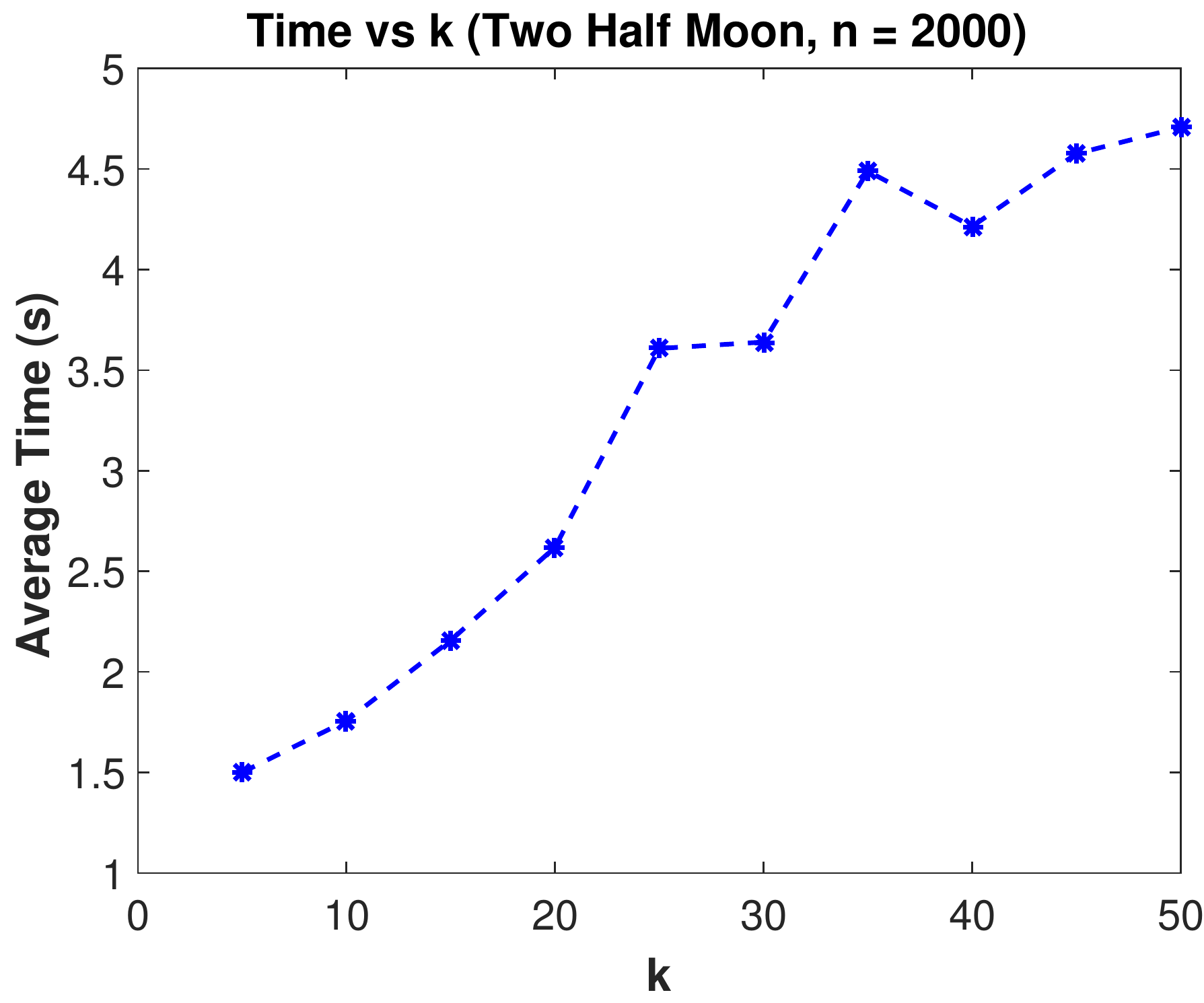}
		\caption{Numerical results to demonstrate the scalability of our proposed algorithm {\sc Ssnal} with respect to
			$n$ and $k$.}
	\end{center}
	\vskip -0.2in
\end{figure}

Comparing to the numerical  results reported in \cite{Chi15} and \cite{pmlr-v70-panahi17a} with $n \leq 500$ and $n \leq 600$, respectively, in our experiments, we apply our algorithm on the Half-Moon data with $n$ ranging
from $100$ to $20000$. Together with the semi-spherical shells with $\mathbf{200,000}$ data points, our results have convincingly demonstrated  the scalability of {\sc Ssnal}.
\section{Conclusion}
In this paper, we established the theoretical recovery guarantee for the general weighted convex clustering model, which includes many popular setting as special cases. The theoretical results we obtained serve to provide a more solid foundation for the convex clustering model. We have also proposed a highly efficient and scalable semismooth Newton based augmented Lagrangian method to solve the convex clustering model (\ref{Eq: Modified_Model}). To the best of our knowledge, this is the first optimization algorithm for convex clustering model which uses the second-order generalized Hessian
information. Extensive numerical results shown in the paper have demonstrated the scalability and superior performance of our proposed algorithm {\sc Ssnal} comparing to the
state-of-the-art first-order methods such as AMA and ADMM.
The convergence results for our algorithm are also provided.

As a possible
future work, we plan to design a distributed and parallel version of {\sc Ssnal}
with the aim to handle huge scale datasets. From the  modeling perspective, we will also work on generalizing our algorithm to handle kernel based convex clustering models.



%
%
%
%
%
%
%

\vskip 0.2in

\end{document}